\newtheorem{definition}{Definition}
\newtheorem{theorem}{Theorem}
\newtheorem{lemma}{Lemma}
\newtheorem{corollary}{Corollary}
\newcommand{\PP}{\mathbb{P}}
\newcommand{\RR}{\mathbb{R}}
\newcommand{\calC}{\mathcal{C}}
\newcommand{\calD}{\mathcal{D}}
\newcommand{\calF}{\mathcal{F}}
\newcommand{\calL}{\mathcal{L}}
\newcommand{\calS}{\mathcal{S}}
\newcommand{\calU}{\mathcal{U}}
\newcommand{\calX}{\mathcal{X}}
\newcommand{\calY}{\mathcal{Y}}
\newcommand{\Sum}[2]{\sum\limits_{#1}^{#2}}
\newcommand{\Prod}[2]{\prod\limits_{#1}^{#2}}
\DeclarePairedDelimiter\ceil{\lceil}{\rceil}
\begin{document}

\title{Global Multiclass Classification and Dataset Construction via Heterogeneous Local Experts}
	
\author{Surin~Ahn,
			Ayfer~{\"O}zg{\"u}r,
			and Mert~Pilanci

	\thanks{Surin~Ahn, Ayfer~{\"O}zg{\"u}r, and Mert~Pilanci are with the Department of Electrical Engineering, Stanford University, Stanford, CA 94305 USA (e-mail: surinahn@stanford.edu; aozgur@stanford.edu; pilanci@stanford.edu).}}

\maketitle

\begin{abstract}
	In the domains of dataset construction and crowdsourcing, a notable challenge is to aggregate labels from a heterogeneous set of labelers, each of whom is potentially an expert in some subset of tasks (and less reliable in others). To reduce costs of hiring human labelers or training automated labeling systems, it is of interest to minimize the number of labelers while ensuring the reliability of the resulting dataset. We model this as the problem of performing $K$-class classification using the predictions of smaller classifiers, each trained on a subset of $[K]$, and derive bounds on the number of classifiers needed to accurately infer the true class of an unlabeled sample under both adversarial and stochastic assumptions. By exploiting a connection to the classical \textit{set cover} problem, we produce a near-optimal scheme for designing such configurations of classifiers which recovers the well known one-vs.-one  classification approach as a special case. Experiments with the MNIST and CIFAR-10 datasets demonstrate the favorable accuracy (compared to a centralized classifier) of our aggregation scheme applied to classifiers trained on subsets of the data. These results suggest a new way to automatically label data or adapt an existing set of local classifiers to larger-scale multiclass problems. 
\end{abstract}

\section{Introduction}
In modern machine learning systems and research, one of the primary bottlenecks is the availability of high quality datasets for training and evaluating models. Indeed, a growing body of literature concerns the question of how to properly construct datasets \cite{ratner2017snorkel, deng2009imagenet, russakovsky2015imagenet, collins2008towards} or label large-scale data via crowdsourcing \cite{karger2014budget, NIPS2011_c667d53a, karger2013efficient}. Recent works \cite{recht2019imagenet, Xie_2020_CVPR} have even called into question the ability of the ImageNet dataset -- a standard benchmark for classification models over the past several years -- to produce models that truly generalize. As a result, datasets are increasingly being viewed as dynamic entities that need to be continuously updated and improved. 

Often, datasets for supervised learning tasks are laboriously hand-labeled by human experts (e.g., medical professionals who examine X-rays or MRI scans) or by labelers hired through crowdsourcing platforms such as Amazon Mechanical Turk \cite{mturk}. Human labelers are prone to error (particularly when labeling samples outside their realm of expertise), and the overall data labeling process can be expensive and time consuming. Therefore, it is of great interest to
\begin{enumerate}
	\item  \textit{minimize} the number of labelers while still ensuring that the resulting dataset is labeled accurately, and 
	\item \textit{automate} the data labeling process as much as possible.
\end{enumerate}

In this paper, we take a step toward achieving these goals by studying, through an information-theoretic lens, the general problem of \textit{ensembling} or \textit{aggregating} a set of smaller, heterogeneous classifiers to solve a larger classification task. Specifically, we ask the fundamental question: 
\medskip 

\noindent \textit{What is the optimal way to construct a global $K$-class classifier, given black-box access to smaller $R$-class classifiers, where $R \in \{2,\ldots, K\}$, and knowledge of which classifiers specialize in which subsets of classes}?
\medskip 

\noindent In the context of goal (1) above, each smaller classifier models a human labeler who is an ``expert'' in some subset of tasks, but gives unreliable or noisy labels for other tasks. As a concrete example, suppose we are interested in diagnosing a patient based on their symptoms using the assessments of medical experts, each of whom specializes in a different set of health conditions. It is plausible that each expert correctly diagnoses the patient if the patient's true condition falls within their area of expertise, but otherwise provides an incorrect diagnosis. 
Before collecting their opinions, we might ask each medical expert to self-report their areas of expertise, and to diagnose the patient \textit{using only these labels} (for example, a cancer specialist should not claim that a patient has heart disease). We are interested in 1) identifying the right set of heterogeneous experts, and 2) aggregating their opinions to infer the patient's true condition, while taking into account which specialists are reliable sources in which areas. In the case of dataset labeling, we are similarly interested in aggregating the votes of various human labelers -- using \textit{a priori} information about their domain knowledge -- to accurately label a given data sample. 

To address goal (2) above, we envision generating new datasets for large-scale classification tasks by aggregating the labels of existing classifiers which specialize in different subsets of classes. In this work, we also show how to adapt an existing set of classifiers to a broader multiclass problem by carefully selecting additional local classifiers. 
Moreover, our work contributes to the literature on \textit{ensemble methods} for multiclass classification. The traditional approach to multiclass classification is to either train a single centralized classifier (e.g., a neural network) on all $K$ classes, or reduce the problem to multiple binary classification tasks. 
This paper explores the uncharted region between these two extremes, providing a generalization of the standard ``one-vs.-one'' decomposition method \cite{Friedman1996AnotherAT, hastie1998classification} in which a data sample is labeled using the majority vote of all possible ${K \choose 2}$ pairs of binary classifiers. A real-world example where our approach might be useful is in identifying diseases present in a biological sample using tests or prediction models which specialize in detecting different subsets of diseases, and which may have been provided by separate hospitals or silos. Furthermore, the classifiers may have been trained using different algorithms and architectures, and therefore must be treated as black-boxes.

Though the model we study in this paper is stylized, it nevertheless yields some fundamental insights into how one should select and aggregate data labelers or classifiers to perform large-scale dataset creation or classification tasks. By assuming that each local classifier is an expert within its own domain, we are effectively investigating the bare minimum requirements that the local classifiers must satisfy to form an accurate global classifier. In practice, one certainly does not expect these classifiers to be perfectly accurate, and it is most likely helpful to judiciously introduce  redundancy into the set of classifiers to improve the global accuracy. However, we view our work as an initial step toward the rigorous study of aggregating heterogeneous experts.

\subsection{Contributions}
We summarize the main contributions of this paper as follows: 
\begin{itemize}
	\item We propose a mathematical model for studying the ensembling of smaller $R$-class classifiers to perform $K$-class classification, 
	and justify this model through empirical findings.  
	\medskip 

	\item Under this model, we derive necessary and sufficient conditions for achieving perfect global classification accuracy under adversarial (worst-case) noise within the local classifiers, and derive bounds which  scale as $\Theta(K^2/R^2)$ (up to a $\log$ factor) on the number of smaller classifiers required to satisfy these conditions. Moreover, we show that a random set of classifiers satisfies the conditions with high probability.
	\medskip 
	
	\item We introduce an efficient voting-based decoding scheme for predicting the true label of an input given the predictions of the smaller classifiers.
	\medskip 
	
	\item We show that the conditions for perfect accuracy are intrinsically related to the classical set cover problem from the combinatorics and theoretical computer science literatures. This connection leads to near-optimal algorithms for designing configurations of smaller classifiers to solve the global problem. We also introduce variations of the original set cover algorithms to address questions specific to dataset construction and adapting a set of local classifiers to larger multiclass objectives.
	\medskip 
	
	\item We consider a statistical setting in which we assume a uniform prior over the classes, uniformly distributed noise within each classifier, and allow a small probability of misclassification, and show that the required number of smaller classifiers now scales as $\Theta(K/R)$ (up to a $\log$ factor), which is a significant reduction compared to the perfect accuracy case.
	\medskip 
	
	\item Through experiments with the MNIST and CIFAR-10 datasets, 
	we show that our set covering-based scheme is able to match the performance of a centralized classifier, and demonstrate its robustness to factors such as missing local classifiers. 
\end{itemize}

\section{Problem Formulation}\label{sec:model}

We consider black-box access to a set of $m$ classifiers $\calF_{m,K} = \{f_i: \calX \to \calY_i, \, i \in [m] \}$, with $\calY_i \subseteq [K]\triangleq \{1,2,\ldots, K\}$ and $2 \leq |\calY_i| \leq K$ for all $i \in [m]$. Each possible input $x\in\calX$ belongs to one of the classes in $[K]$. We assume that each classifier $f_i$ was trained to distinguish only between a subset of the classes in $[K]$, i.e., those contained in $\calY_i$. Therefore, given an input $x \in \calX$, each $f_i$ necessarily outputs a class in $\calY_i$. Any classes in $[K] \, \backslash \, \calY_i$ are considered to be outside of its ``universe.'' This models a distributed setting, where each classifier or local expert has access to data belonging to only a subset of the $K$ classes. Note that $f_i$ outputs only its final class prediction, rather than confidence scores or conditional probability estimates (we later provide some justification for this modeling decision). We further assume that  $\{\calY_i, \, i \in [m]\}$ is known to the central orchestrator, and we call $|\calY_i|$ the \textit{size} of the classifier $f_i$. Given a new input $x_k \in \calX$ belonging to class $k \in [K]$, we assume that
\[f_i(x_k) = k \, \text{ if } k \in \calY_i.\]
In words, $f_i$ always makes correct predictions on inputs belonging to familiar classes. This model captures the notion that a properly trained classifier or labeler is expected to accurately classify a new input whose true class is among its known universe of classes.  When $k \not\in \calY_i$, then by definition $f_i(x_k) \neq k$ since $f_i$ always maps to $\calY_i$. Hence, $f_i$'s predictions on inputs belonging to classes outside of $\calY_i$ can be considered as undesirable ``noise'' that we wish to circumvent. We will consider both adversarial (or worst-case) and stochastic models for  $f_i(x_k)$ when $k \not\in \calY_i$.

In this paper, we study the problem of inferring the class of an unknown input given the ``one-shot'' outputs of the $m$ classifiers $\calF_{m,K}$, i.e., the results of feeding the input a single time to each of the classifiers. Note that this one-shot modeling assumption is fitting in practice, as trained classifiers typically produce the same output when given the same input multiple times. We next define a \textit{$K$-class classification scheme} constructed out of $\calF_{m,K}$. 
	
	\begin{definition}[Classification Scheme]
		A $K$-class classification scheme is a pair $(\calF_{m,K}, g)$ where $\calF_{m,K} = \{f_i: \calX \to \calY_i, \, i \in [m] \}$ is a set of $m$ local classifiers satisfying $\calY_i \subseteq [K]$, $2 \leq |\calY_i| \leq K$ for all $i \in [m]$, and $g: \calY_1\times\cdots \times \calY_m \to [K]$ is a decoder which predicts a class upon observing the outputs produced by classifiers in $\calF_{m,K}$. Specifically, given an input $x \in \calX$, the global class prediction is given by $g(f_1(x),f_2(x),\ldots,f_m(x))$. 
	\end{definition}

We remark that this problem can be interpreted as an unorthodox communications setting where the transmitter is trying to convey a message $k \in [K]$ to the receiver, but the receiver can only observe the outputs of $m$ channels, each of which is selectively noisy depending on whether $k$ is among its ``accepted'' symbols. The goal of the receiver is to decode the message $k$ given the $m$ channel outputs.

Broadly, our goal in this paper is to study when and how we can construct an accurate $K$-class classification scheme given $\calF_{m,K}$, or conversely, how we can construct a set $\calF_{m,K}$ of $m$ small classifiers of a given size so that accurate $K$-class classification using $\calF_{m,K}$ is possible. In the second case, it is clearly desirable for $\calF_{m,K}$ to be minimal. Note that the first problem corresponds to synthesizing a global (bigger) classifier from a given set of local (smaller) classifiers, while the second problem corresponds to decomposing a global classifier into multiple local classifiers. In the rest of the paper, we will study these problems in the following two different settings.

\subsection{Perfect Accuracy Setting}

Here, we will require the $K$-class classification scheme $(\calF_{m,K}, \calD)$ to correctly recover the true class of any input $x\in\calX$ for any possible set of outputs from $\calF_{m,K}$. More precisely, we define the \textit{output set } $\calS_k$ of a class $k \in [K]$, with respect to a fixed set of classifiers $\calF_{m,K} = \{f_i: \calX \to \calY_i, \, i \in [m]\}$, as the set of all possible classifier outputs given that the true class of the input is $k$: 

{
\begin{equation*}
\calS_k \triangleq \Bigg\{(y_1,\ldots, y_m) \, : \, y_i = k \text { if } k \in \calY_i, \, y_i \in \calY_i \text{ if } k \in [K]\backslash \calY_i \Bigg\}.
\end{equation*}
}

Note that  $\calS_k$ can be constructed using only knowledge of the $\calY_i$. In the perfect accuracy setting, we require the $K$-class classification scheme to correctly recover $k$ given any observation $y=(y_1,\ldots, y_m)\in \calS_k$. Specifically, we say that a scheme $(\calF_{m,K}, g)$ achieves perfect $K$-class classification accuracy if for any $k\in [K]$ and any $y \in \calS_k$, we have $g(y) = k$.

Note that this can be regarded as an adversarial or worst-case setting in the sense that for each class $k \in [K]$, the $m$ classifiers are allowed to jointly produce the most confusing output $y=(y_1,\ldots,y_m)$. In particular, if perfect accuracy can be achieved under this model, it can be achieved under any joint probabilistic model for the behavior of the $m$ classifiers. In the next section, we focus on one specific probabilistic model, which assumes that the outputs of the $m$ classifiers are independent and uniformly distributed over $\calY_i$ when $k\not\in\calY_i$.

\subsection{Statistical Setting}\label{subsec:statistical}

In this setting, given a new test sample $x_k \in \calX$ belonging to class $k \in [K]$, we will assume that a local classifier $f_i: \calX \to \calY_i$  correctly outputs $k$ if $k \in \calY$, and otherwise will pick a class uniformly at random among those in $\calY_i$. Mathematically, 
\begin{equation}\label{eqn:model}
f_i(x_k) = 
\begin{cases}
k, & \text{if } k \in \calY_i \\
U \sim \text{Uniform}(\calY_i), & \text{if } k \not\in \calY_i
\end{cases}
\end{equation}
where $U \sim \text{Uniform}(\calY_i)$ denotes a uniform random variable with support $\calY_i$.  Note that the output of a classifier in this setting (as well as the earlier perfect accuracy setting) depends only on the true class $k$ corresponding to an input $x_k$, and does not depend on the input itself. Therefore, we will sometimes write $f(k)$ instead of $f(x_k)$ for notational simplicity. 

Given $m$ local classifiers $f_i : \calX \to \calY_i, \, i \in [m]$, we assume that the outputs of distinct classifiers~\footnote{We say that classifiers $f_1$ and $f_2$ are distinct if $\calY_1 \neq \calY_2$.}, denoted by the random vector $Y = (Y_1,\ldots, Y_m) \in \calY_1 \times \calY_2 \times \cdots \times \calY_m$, are conditionally independent given the true class of the input, denoted by $Z$. This is equivalent to assuming that two distinct classifiers $f_1,f_2$ have independent noise, i.e., independent sources of randomness $U_1, U_2$. In this setting, we further assume that $Z$ is chosen uniformly at random from $[K]$, i.e., the prior we impose on the classes is given by $\pi(k) = \frac{1}{K}, \forall k \in [K]$. Let $P_e \triangleq \PP(g(Y) \neq Z)$ denote the average probability of error, where $g(Y)$ is the decoder's estimate of $Z$ based on $Y$. We will now require the $K$-class classifier to have $P_e \leq \epsilon$ for some fixed $\epsilon \in (0,1)$. In the sequel, we aim to understand how the decoder can exploit the fact that the true class and the noisy outputs of the classifiers are uniformly distributed and whether this can lead to significant gains with respect to the worst-case setting discussed earlier.

\subsection{Model Justification}\label{sec:justification}

While our proposed classification model is stylized, it offers a number of benefits. First, it is fairly general, as it makes no assumptions about the underlying classification algorithm. We consider only ``hard'' outputs (i.e., final class predictions), rather than ``soft'' outputs or confidence scores (e.g., outputs of a softmax layer or estimates of conditional class probabilities), as the soft outputs may be incomparable across heterogeneous classifiers due to differences in units or in the underlying algorithms and architectures. In the case of human labelers, it is even less clear how one could obtain comparable soft outputs. 

\begin{figure}[t]
	\centering
	\subfigure[]{\includegraphics[width=0.3\textwidth]{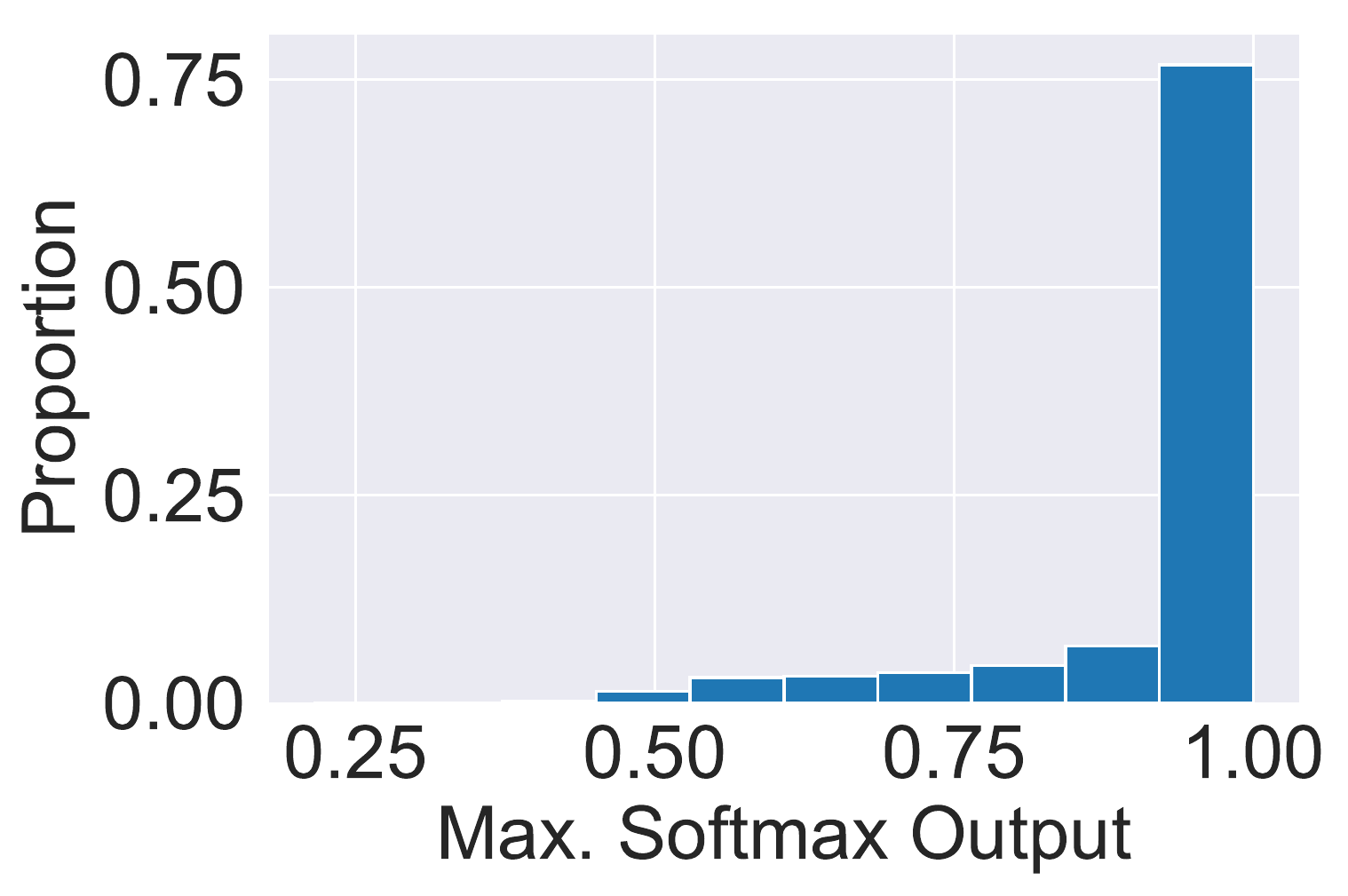}}\hspace{8em} 
	\subfigure[]{\includegraphics[width=0.3\textwidth]{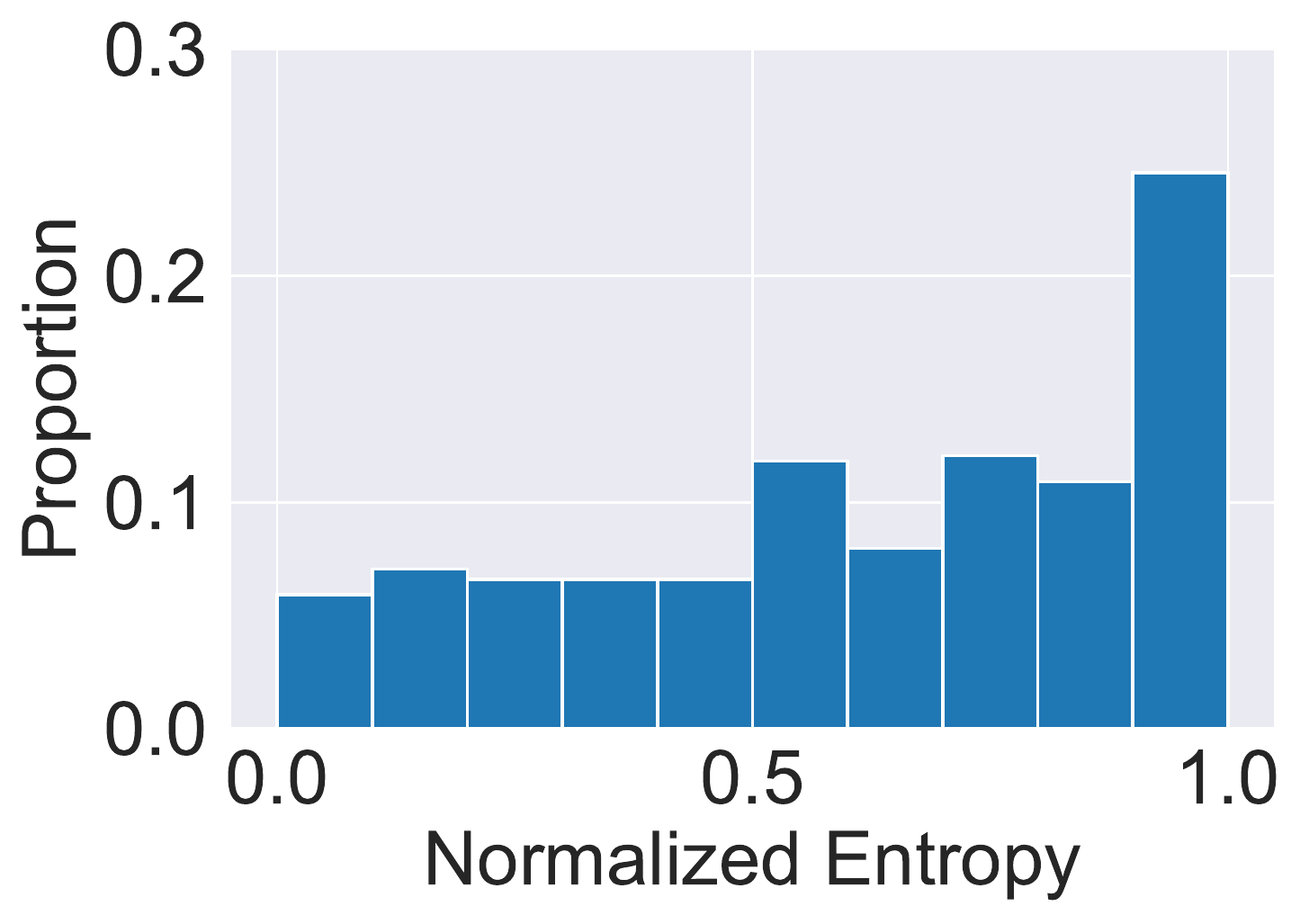}}
	\caption{Histograms of (a) the maximum softmax output of MNIST classifiers when given images belonging to an unfamiliar class, and (b) the normalized entropy of the empirical distribution of predictions produced by classifiers when given images of an unfamiliar class.}
	\label{fig:model_justification}
\end{figure}

Figure~\ref{fig:model_justification}(a) provides further justification for assuming hard outputs. 
We trained 62 different convolutional neural networks (CNNs) with a softmax layer as the final layer, each trained on a different subset of the 10 MNIST digits \cite{lecun1998gradient} 
(see Section~\ref{sec:experiments} for further details). For each classifier, we observed its predictions on 1,000 images of each \textit{unfamiliar class}, and plotted the resulting histogram of maximum softmax outputs in  Figure~\ref{fig:model_justification}(a). The histogram suggests that in practice, classifiers often give extremely confident predictions of images for which they know nothing about the underlying class, which supports our hard-outputs modeling decision.  A related discussion was provided recently in the context of adversarial machine learning \cite{verma2019error}.

We now provide some partial justification for assuming in the statistical setting that the local classifiers have uniformly distributed noise. In the proof of Theorem~\ref{thm:stat_LB}, we will see that our lower bound on the minimum number of classifiers needed to achieve $P_e \leq \epsilon$ is maximized by minimizing the mutual information between the true class and the classifier outputs. This is achieved by maximizing the conditional entropy of predictions given the true class, which in turn is achieved by the uniform distribution. Thus, one might suspect that uniform noise can make the problem more difficult compared to other random models, assuming that classifier outputs are conditionally independent given the true class. However, we acknowledge that our lower bound in general does not provide sufficient evidence to conclude that uniform noise is ``worst-case'' in a stochastic sense.

Empirically, we also observed that classifiers sometimes exhibit behavior similar to uniform noise. Using the same setup as before, for each classifier and each unfamiliar class we generated the empirical distribution, $\hat{\mathcal{P}} = \begin{pmatrix}\hat{p}_1,\ldots,\hat{p}_R\end{pmatrix}$ -- where $R$ is the size of the classifier -- of labels assigned by the classifier when given images belonging to the unfamiliar class. We then computed the normalized entropy \[\frac{1}{\log (R)}\cdot H(\hat{\mathcal{P}}) = -\frac{1}{\log (R)}\Sum{i=1}{R}\hat{p}_i\log \hat{p}_i. \] 
Figure~\ref{fig:model_justification}(b) shows the histogram of normalized entropies that were observed. A normalized entropy close to 1 indicates that the distribution $\hat{P}$ is close to the uniform distribution with the same support. Indeed, the histogram shows that a nontrivial proportion of the MNIST classifiers exhibited behavior similar to the uniform noise in our model.

\section{Paper Organization}\label{sec:organization} 
Throughout the rest of this paper, we answer the following questions. Section \ref{sec:perfect} addresses questions (1) and (2), Section \ref{sec:setcover} addresses question (3), and Section \ref{sec:statistical} addresses question (4).
\begin{enumerate}
	\item What conditions must classification schemes satisfy for perfect $K$-class classification accuracy to be possible?
	\medskip 
	
	\item  Suppose we can design our own set of $m$ classifiers $\{f_i: \calX \to \calY_i \}$, under the size constraint $|\calY_i| \leq R, \, \forall i \in [m]$, for some fixed integer $R$ satisfying $2 \leq R \leq K$. What is the minimum number of classifiers, $m^*$ (in terms of $K$ and $R$), needed to satisfy the conditions from (1)? 
	\medskip 
	
	\item  Is there an efficient algorithm for designing these classifiers given $K$ and $R$? 
	\medskip 
	
	\item In the statistical setting, how does the scaling of $m^*$ change? 
\end{enumerate}

Subsequently, we present experimental results in Section~\ref{sec:experiments}, provide a discussion and potential extensions of our work in Section~\ref{sec:discussion}, and close with a summary of related works in Section~\ref{sec:related}. All omitted proofs are given in the Appendix.
\section{Perfect Accuracy Setting}\label{sec:perfect}
We begin by answering question (1). Here, the goal is to determine the conditions under which exact recovery of the true class corresponding to an input is possible, based solely on observations of the classifier outputs and \textit{a priori} knowledge of $\calY_1,\ldots \calY_m$.

\subsection{Necessary and Sufficient Condition}\label{subsec:conditions}
First, we define the notion of distinguishability between two classes. 
\begin{definition}[Distinguishability]
	We say that two classes $k, k' \in [K]$, $k \neq k'$ are distinguishable with respect to a set of classifiers $\calF_{m,K}$ if their output sets are disjoint, i.e., $\calS_k \cap \calS_{k'} = \emptyset$.
\end{definition}

From a communications perspective, the vectors in a class's output set can be interpreted as the possible channel outputs that the decoder may observe after the class is transmitted over the $m$ noisy channels. Thus, a natural claim is that perfect accuracy is achievable when the output sets of the $K$ classes do not overlap with each other, i.e., each pair of classes is distinguishable. If a set of classifiers $\calF_{m,K}$ satisfies these conditions, we say that it achieves \textit{pairwise distinguishability}. Theorem~\ref{thm:distinguishability} below says that pairwise distinguishability is a necessary and sufficient condition for achieving perfect accuracy under our classifier model. 

\begin{theorem}\label{thm:distinguishability}
	Given a set of $m$ classifiers $\calF_{m,K}=\{f_i: \calX \to \calY_i, i \in [m]\}$ with $\calY_i \subseteq [K]$ and $2 \leq |\calY_i| \leq K$ for all $i \in [m]$, perfect accuracy for the $K$-class classification problem is achievable if and only if all pairs of classes are distinguishable with respect to $\calF_{m,K}$. 
\end{theorem}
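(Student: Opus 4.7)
The plan is to prove the two directions separately, both via direct constructions stemming from the definition of $\calS_k$.

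For the sufficiency direction, assuming pairwise distinguishability, I would explicitly construct the decoder $g$ as follows: on input $y \in \calY_1 \times \cdots \times \calY_m$, let $g(y) = k$ whenever $y \in \calS_k$. Pairwise distinguishability means the sets $\{\calS_k\}_{k \in [K]}$ are pairwise disjoint, so this rule is unambiguous. The key observation is that whenever the true class of the input is $k$, the resulting classifier output vector is, by definition, an element of $\calS_k$, hence $g$ returns $k$. This yields perfect accuracy for the scheme $(\calF_{m,K}, g)$. (If the rare case arises that some $y$ lies in none of the $\calS_k$'s, the decoder can output an arbitrary class since such a $y$ will never actually be produced.)

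For the necessity direction, I would argue by contrapositive: suppose pairwise distinguishability fails, so there exist distinct classes $k \neq k'$ and some vector $y \in \calS_k \cap \calS_{k'}$. In the adversarial/worst-case setting, the definition of $\calS_k$ guarantees that $y$ is a possible output vector when the true class is $k$, and likewise $y$ is possible when the true class is $k'$. For any decoder $g$, the single value $g(y)$ cannot simultaneously equal both $k$ and $k'$, so at least one of the two classes will be misclassified on this worst-case output. Hence no scheme can achieve perfect accuracy.

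The argument is essentially tautological once the definition of $\calS_k$ is unpacked, so I do not anticipate a substantive obstacle; the only point to be careful about is making sure the worst-case semantics of $\calS_k$ truly justify that any $y \in \calS_k$ is physically realizable as a classifier output when the true class is $k$. This follows directly from the adversarial model, in which the outputs of $\{f_i : k \notin \calY_i\}$ are allowed to take any value in $\calY_i$ jointly, so every element of the Cartesian product defining $\calS_k$ can indeed occur.
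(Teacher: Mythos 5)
Your proposal is correct and follows essentially the same route as the paper's proof: the sufficiency direction uses the same lookup-table decoder that maps $y$ to the unique $\calS_k$ containing it, and the necessity direction is the same two-class collision argument showing any decoder must err on some $y \in \calS_k \cap \calS_{k'}$. Your added remark on the realizability of every element of $\calS_k$ under the adversarial model is a reasonable point of care but does not change the substance.
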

\begin{proof}
	First, suppose that all pairs of classes are distinguishable, and consider the following decoding function, $g$, for predicting the class of a particular input given the $m$ observed classifier outputs. Initially, we generate a lookup table consisting of $\calS_1,\ldots, \calS_K$. Note that this can be done using only $\calY_1,\ldots, \calY_m$, which we assume are known \textit{a priori}. Given an input $x \in \calX$, we observe the classifier outputs $y = (f_1(x),f_2(x),\ldots, f_m(x))$. We then simply find any $\calS_{\hat{k}}$ such that $y \in \calS_{\hat{k}}$, and declare $\hat{k}$ as the prediction, i.e., set $g(y) = \hat{k}$. If $k \in [K]$ is the true class, then it must be the case that $y \in \calS_k$, by definition of the output sets. Moreover, by the assumption of pairwise distinguishability, $\calS_k$ must be the \textit{only} output set containing $y$. Thus, $g(y) = k$.
	
	Now, suppose w.l.o.g. that classes $1,2 \in [K]$  are not distinguishable with respect to $\calF_{m,K}$. This means $\exists \, y = (y_1,\ldots, y_m)$ such that $y \in \calS_{1} \cap \calS_{2}$. For any decoder $g$, note that $g(y) \neq 1$ or $g(y) \neq 2$, as $g(y) = 1 \implies g(y) \neq 2$. If $g(y) \neq 1$, then since $y \in \calS_1$, the classification scheme $(\calF_{m,K},g)$ fails to achieve perfect accuracy. Similarly, failure occurs if $g(y) \neq 2$,  since $y \in \calS_2$.
		
\end{proof}

Theorem~\ref{thm:distinguishability} says that as long as there is zero ambiguity in the outputs that can result from each of the $K$ classes, i.e., $\calF_{m,K}$ gives an injection from classes to classifier outputs, then the decoder can always determine the correct class. However, this result does not address whether pairwise distinguishability can be achieved by some set of classifiers $\calF_{m,K}$. The lemma below gives a condition that is  equivalent to distinguishability. 

\begin{lemma}\label{lem:distinguish}
	Two classes $k,k' \in [K]$, $k \neq k'$ are distinguishable with respect to $\calF_{m,K}$ if and only if there exists a classifier $f \in \calF_{m,K}, \, f : \calX \to \calY$, such that $k,k' \in \calY$. 
\end{lemma}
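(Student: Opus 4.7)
My plan is to prove both directions by carefully unpacking the definition of the output sets $\calS_k$ in terms of the coordinates of a vector $y = (y_1,\ldots,y_m)$.

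For the sufficient direction, suppose some classifier $f_i \in \calF_{m,K}$ with $f_i : \calX \to \calY_i$ satisfies $k, k' \in \calY_i$. I would observe that for any $y \in \calS_k$, the defining property forces $y_i = k$, while any $y' \in \calS_{k'}$ must satisfy $y'_i = k'$. Since $k \neq k'$, the two output sets cannot share a vector, so $\calS_k \cap \calS_{k'} = \emptyset$, establishing distinguishability.

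For the necessary direction, I would argue by contrapositive: assume no classifier in $\calF_{m,K}$ has both $k$ and $k'$ in its label set, and construct an explicit $y \in \calS_k \cap \calS_{k'}$. For each $i \in [m]$, define $y_i$ by cases according to which of $k, k'$ lies in $\calY_i$: if $k \in \calY_i$ and $k' \notin \calY_i$, set $y_i = k$; if $k' \in \calY_i$ and $k \notin \calY_i$, set $y_i = k'$; if neither, pick any element of $\calY_i$. The fourth case ($k, k' \in \calY_i$) is excluded by assumption. I would then verify that this $y$ satisfies the defining conditions for $\calS_k$ (the coordinate is forced to $k$ exactly on classifiers where $k \in \calY_i$, which holds by construction, and in the remaining cases $y_i \in \calY_i$) and, symmetrically, for $\calS_{k'}$; the subtle point is that in the first case $y_i = k \in \calY_i$, which is required for the $\calS_{k'}$ membership since $k' \notin \calY_i$ there. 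This produces the required intersection point, contradicting distinguishability.

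I do not expect a real obstacle here: the argument is essentially a coordinate-by-coordinate bookkeeping exercise on the definition of $\calS_k$. The one place to be careful is the reverse direction, where one must check that the constructed $y$ is simultaneously a valid element of both output sets — in particular, that setting $y_i = k$ on classifiers that know only $k$ is consistent with $y \in \calS_{k'}$ (it is, precisely because $k \in \calY_i$ makes $y_i$ a permissible ``noisy'' output under class $k'$). Once this is checked, the lemma follows immediately.
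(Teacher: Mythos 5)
Your proof is correct and follows essentially the same route as the paper's: the forward direction pins down the $i$-th coordinate of any element of $\calS_k$ and $\calS_{k'}$ to force disjointness, and the reverse direction constructs a common output vector by the same three-way case split on whether $k$, $k'$, or neither lies in $\calY_i$. Your extra remark verifying that $y_i = k$ remains a permissible noisy output for class $k'$ when $k' \notin \calY_i$ is a detail the paper leaves implicit, but the argument is the same.
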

\begin{proof}
	
	First, suppose there exists a classifier $f_i: \calX \to \calY_i$ such that $k,k' \in \calY_i$. Then every $y=(y_1,\ldots,y_m) \in \calS_k$ satisfies $y_i = k$, and every $y' = (y_1',\ldots,y_m') \in \calS_{k'}$ satisfies $y'_i = k'$. It follows that $\calS_k \cap \calS_{k'} = \emptyset$, so $k$ and $k'$ are distinguishable.
	
	Now suppose there is no classifier $f: \calX \to \calY$ such that $k, k' \in \calY$. Then for every classifier $f: \calX \to \calY$, one of the following can happen: 1) $k, k' \not\in \calY$, 2) $k \in \calY$ and $k' \not\in \calY$, or 3) $k \not\in \calY$ and $k' \in \calY$. Consider the $i^\text{th}$ classifier, $f_i: \calX \to \calY_i$, and suppose case 1 holds. Then set $y_i = \tilde{k}$ for some arbitrary $\tilde{k} \in \calY_i$. For cases 2 and 3, assume w.l.o.g. that $k \in \calY_i$ and $k' \not\in \calY_i$. In this case, set $y_i = k$. The resulting vector of outputs $y = (y_1,\ldots,y_m)$ satisfies $y \in \calS_k \cap \calS_{k'}$, so $k$ and $k'$ are not distinguishable.  

\end{proof}

An immediate consequence of Theorem~\ref{thm:distinguishability} and Lemma~\ref{lem:distinguish} is the following corollary.

\begin{corollary}\label{cor:covering}
	Perfect accuracy under a set of classifiers $\calF_{m,K}$ is achievable if and only if for every pair of classes $k,k' \in [K], \, k \neq k'$, there exists a classifier $f \in \calF_{m,K}$, $f: \calX \to \calY$, such that $k,k' \in \calY$.
\end{corollary}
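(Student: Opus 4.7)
The plan is simply to chain Theorem~\ref{thm:distinguishability} and Lemma~\ref{lem:distinguish}, since the corollary is precisely their conjunction rewritten in a single statement. I would begin by invoking Theorem~\ref{thm:distinguishability} to rephrase ``perfect accuracy is achievable under $\calF_{m,K}$'' as the equivalent statement ``every pair of classes $k,k' \in [K]$ with $k \neq k'$ is distinguishable with respect to $\calF_{m,K}$.'' This reduces the task to a pair-by-pair condition.

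Next, I would apply Lemma~\ref{lem:distinguish} to each such pair $(k,k')$ individually. The lemma characterizes distinguishability of a single pair by the existence of a classifier $f \in \calF_{m,K}$, $f:\calX \to \calY$, with $\{k,k'\} \subseteq \calY$. Taking the conjunction of this equivalence over all $\binom{K}{2}$ unordered pairs yields the equivalence in the corollary: pairwise distinguishability of all pairs holds if and only if every pair of classes is covered by some classifier's universe.

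Since both the ``only if'' and ``if'' directions of Theorem~\ref{thm:distinguishability} and Lemma~\ref{lem:distinguish} have already been established, there is no genuine obstacle to overcome here. The proof is essentially a one-line composition: $\text{perfect accuracy} \iff \text{all pairs distinguishable} \iff \text{every pair is covered by some } \calY_i$. No new constructions or case analyses are needed, and the statement could even be presented without formal proof as a direct consequence. I would keep the write-up to two or three sentences to emphasize that the content of the result lies in the earlier theorem and lemma, with the corollary serving as a convenient bridge to the set-cover formulation developed in the next section.
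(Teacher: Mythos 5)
Your proposal is correct and matches the paper exactly: the paper presents Corollary~\ref{cor:covering} as an immediate consequence of Theorem~\ref{thm:distinguishability} and Lemma~\ref{lem:distinguish}, which is precisely the two-step composition you describe. Nothing further is needed.
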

To achieve perfect accuracy under worst-case noise, it therefore suffices to have the local classifiers, in aggregate, cover all possible pairwise connections between classes. If this is the case, we say that $\calF_{m,K}$ satisfies the \textit{covering condition}. The question of how to algorithmically generate configurations $\calF_{m,K}$ satisfying the covering condition will be addressed in Section~\ref{sec:setcover}, where we discuss connections to the set cover and clique cover problems.  Corollary~\ref{cor:covering} also makes it clear that when given the choice to design a classifier of size \textit{at most} $R$ for some $R \in \{2,3,\ldots,K\}$, one should always choose the maximum possible size, $R$, as this can only help us get closer to achieving perfect accuracy. 

\subsection{Decoding Schemes}\label{subsec:decoder}
We now discuss decoding schemes for predicting the class of an input $x \in \calX$ given the $m$ observed classifier outputs $y = (f_1(x),f_2(x),\ldots, f_m(x))$. In the proof of Theorem~\ref{thm:distinguishability}, we considered a na{\"i}ve approach which requires generating a lookup table of all classes' output sets $\calS_1,\ldots, \calS_K$. When the covering condition from Corollary~\ref{cor:covering} is satisfied, we can  instead use a more efficient decoding scheme with complexity $O(mK)$, which works as follows  \footnote{We note that this decoding scheme can be used in practice even when the covering condition is not satisfied. The covering condition just ensures that perfect accuracy is achieved under our model.}. Instead of storing the entire lookup table, the decoder can store what we call the \textit{authority classifiers}, $\calC_1,\ldots, \calC_K$, defined for each class $k \in [K]$ as the set of classifiers which were trained on class $k$: 
\begin{equation}
\calC_k \triangleq \{i \in [m] \, : \, k \in \calY_i\}, \quad  k \in [K].
\end{equation}
We can think of $\calC_k$ as the indices of classifiers whose predictions we trust with respect to inputs belonging to class $k$. For each $k \in [K]$, the decoder counts the number of votes received from $k$'s authority classifiers: 
\begin{equation}
N_k(y) \triangleq \Big|\{i \in \calC_k\, : \, y_i = k\}\Big|.
\end{equation}
Finally, it predicts the class which received the largest normalized number of votes from its authority classifiers:
\begin{equation}\label{eqn:decoder}
g(y) = \underset{k \in [K]}{\text{argmax}}\; \frac{N_k(y)}{|\calC_k|}.
\end{equation}

Suppose $\calF_{m,K}$ satisfies the covering condition. If the true class of the input is $k$, then for any $y \in \calS_k$, it must be the case that $N_k(y)/|\calC_k| = 1$, as all of $k$'s authority classifiers will correctly output $k$. On the other hand, for any $k' \neq k$, by assumption there must exist a classifier $f: \calX \to \calY$ such that $k,k' \in \calY$. Note that $f$ is an authority classifier for both $k$ and $k'$, but will be guaranteed to output $k$. We will therefore observe $N_{k'}(y)/|\calC_{k'}| < 1$, and hence the decoding scheme will correctly predict $k$.

\subsection{Binary Matrix Representation} 
We now start to address question (2) from Section~\ref{sec:organization}. We find that there is a one-to-one correspondence between classifier configurations $\calF_{m,K}$ and binary matrices with row weight at least 2. This abstraction will somewhat simplify our analysis in Section \ref{subsec:bounds}. 

\begin{definition}[Classification Matrix]\label{def:matrix}
	The classification matrix $A$ corresponding to a set of classifiers $\calF_{m,K} = \{f_i: \calX \to \calY_i, i \in [m]\}$ is an $m \times K$ binary matrix with 
	\[A_{ij} = 
	\begin{cases}
	1, & \text{ if } j \in \calY_i \\
	0, & \text{ otherwise,}
	\end{cases}
	\]
	i.e., $A_{ij} = 1$ if and only if the $i^\text{th}$ classifier was trained on the $j^\text{th}$ class. Conversely, a binary matrix $A \in \{0,1\}^{m \times K}$ with row weight at least $2$ uniquely defines a set of classifiers $\calF_{m,K}$ as follows: the $i^\text{th}$ row of $A$ defines a classifier $f_i : \calX \to \calY_i$ with $\calY_i = \{j \in [K] \,:\, A_{ij} = 1 \}$.
\end{definition}

The following lemma provides a bridge between the classification matrix and the results of Section \ref{subsec:conditions}.

\begin{lemma}\label{lem:matrix}
	For a pair of classes $k,k' \in [K], \, k \neq k'$, there exists a classifier $f \in \calF_{m,K}, \, f: \calX \to \calY$, such that $k, k' \in \calY$ if and only if there exists an $i \in [m]$ such that $A_{ik} = A_{ik'} = 1$, where $A \in \{0,1\}^{m \times K}$ is the classification matrix corresponding to $\calF_{m,K}$. 
\end{lemma}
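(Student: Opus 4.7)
The plan is to prove this lemma by directly unpacking Definition~\ref{def:matrix}, since the statement is essentially a restatement of the correspondence between classifier configurations and binary matrices in the language of pairs of classes. I do not anticipate any real obstacle here; the main task is simply to verify that the two existential statements (``there exists a classifier'' and ``there exists a row index'') are in bijective correspondence via the rows of $A$.

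For the forward direction, I would start by assuming that there is some $f \in \calF_{m,K}$ with $f : \calX \to \calY$ and $k, k' \in \calY$. Since $\calF_{m,K} = \{f_i : \calX \to \calY_i, \, i \in [m]\}$, there exists $i \in [m]$ with $f = f_i$ and $\calY = \calY_i$. By Definition~\ref{def:matrix}, $A_{ij} = 1$ if and only if $j \in \calY_i$, so $k \in \calY_i$ gives $A_{ik} = 1$ and $k' \in \calY_i$ gives $A_{ik'} = 1$, as required.

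For the reverse direction, I would suppose there is some $i \in [m]$ with $A_{ik} = A_{ik'} = 1$. Again by Definition~\ref{def:matrix}, this says exactly that $k \in \calY_i$ and $k' \in \calY_i$, so the classifier $f_i \in \calF_{m,K}$, with $f_i : \calX \to \calY_i$, is the desired classifier. Since both implications hold, the lemma follows. In short, this lemma is a translation step that sets up the matrix-based analysis in Section~\ref{subsec:bounds}; the covering condition of Corollary~\ref{cor:covering} now reads as the requirement that for every pair of columns $k, k' \in [K]$ of $A$, there is a row in which both columns take the value $1$.
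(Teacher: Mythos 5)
Your proof is correct and takes the same approach as the paper, which simply states that the lemma ``follows by construction of the classification matrix $A$''; you have just written out explicitly the two-directional unpacking of Definition~\ref{def:matrix} that the authors leave implicit. No issues.
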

\begin{proof}
	This follows by construction of the classfication matrix $A$.
\end{proof}


\begin{lemma}\label{lem:fully_distinguishing}
	Perfect accuracy is achievable under a set of classifiers $\calF_{m,K}$ with corresponding classification matrix $A \in \{0,1\}^{m \times K}$ if and only if for any $k,k' \in [K]$ with $k \neq k'$, there exists an $i \in [m]$ such that $A_{ik} = A_{ik'} = 1$. In this case, we say that $A$ is \textbf{fully distinguishing.}
\end{lemma}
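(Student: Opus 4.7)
The plan is to derive this lemma as a direct consequence of the three preceding results, with essentially no new content beyond a chain of equivalences. Specifically, I would combine Theorem~\ref{thm:distinguishability}, Lemma~\ref{lem:distinguish}, and Lemma~\ref{lem:matrix} in sequence, translating each characterization into the next.

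First I would invoke Theorem~\ref{thm:distinguishability} to replace ``perfect accuracy is achievable'' with ``every pair of classes $k, k' \in [K]$ with $k \neq k'$ is distinguishable with respect to $\calF_{m,K}$.'' Next, I would apply Lemma~\ref{lem:distinguish} to each such pair, converting ``$k$ and $k'$ are distinguishable'' into the existence of a classifier $f \in \calF_{m,K}$ with $f: \calX \to \calY$ such that $k, k' \in \calY$. Finally, I would apply Lemma~\ref{lem:matrix}, which restates the existence of such a classifier in terms of the classification matrix $A$: namely, the existence of a row index $i \in [m]$ with $A_{ik} = A_{ik'} = 1$. Since each of these three equivalences runs in both directions, the overall biconditional follows immediately, and the definition of $A$ being ``fully distinguishing'' simply names this matrix-level condition.

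There is no real obstacle here; the work was already done in establishing the supporting lemmas, and this statement serves as a compact matrix-theoretic reformulation that will be convenient for the counting arguments in Section~\ref{subsec:bounds}. The only thing worth being careful about is to spell out both directions explicitly (or observe that the three cited results are all ``if and only if'' statements, so the composition is automatically a biconditional), and to note that the row-weight-at-least-2 constraint from Definition~\ref{def:matrix} is preserved throughout since each classifier in $\calF_{m,K}$ has $|\calY_i| \geq 2$. Given how short the argument is, I would write the proof in one or two sentences, essentially: ``By Theorem~\ref{thm:distinguishability}, perfect accuracy is achievable iff every pair of distinct classes is distinguishable; by Lemma~\ref{lem:distinguish}, this holds iff every pair $k, k'$ is jointly contained in some $\calY_i$; and by Lemma~\ref{lem:matrix}, this in turn is equivalent to the existence of an $i \in [m]$ with $A_{ik} = A_{ik'} = 1$.''
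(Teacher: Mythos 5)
Your proposal is correct and matches the paper's argument, which simply combines Corollary~\ref{cor:covering} (itself the composition of Theorem~\ref{thm:distinguishability} and Lemma~\ref{lem:distinguish}) with Lemma~\ref{lem:matrix}; your chain of three biconditionals is the same derivation with the first two steps unpacked. Nothing is missing.
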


\begin{proof}
	Combine Corollary~\ref{cor:covering} and Lemma~\ref{lem:matrix}. 
\end{proof}

This result reformulates our exact-recovery problem as the somewhat more concrete problem of designing the binary classification matrix. To answer question (2) from Section~\ref{sec:organization}, we now impose the size constraint $|\calY_i| \leq R, \, \forall i \in [m]$, for some fixed integer $R$ satisfying $2 \leq R \leq K$. In light of Lemma~\ref{lem:fully_distinguishing}, we want to understand how the minimum number of rows, $m^*$, required to create a fully distinguishing classification matrix scales with the number of columns, $K$, and maximum row weight, $R$. 

\subsection{Bounds on the Number of Local Classifiers}\label{subsec:bounds}
We now give a lower bound of $m^* = \Omega(K^2/R^2)$ on the minimum number of rows in a fully distinguishing classification matrix. 

\begin{theorem}\label{thm:perfect_LB}
	For any integer $K \geq 2$, an $m \times K$ classification matrix that is fully distinguishing with maximum row weight $R \in\{2,3,\ldots, K\}$ must satisfy 
	\[m \geq \left\lceil\frac{K(K-1)}{R(R-1)} \right\rceil.\]
\end{theorem}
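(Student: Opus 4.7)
The plan is to reduce the bound to an elementary double-counting argument on pairs of classes. By Lemma~\ref{lem:fully_distinguishing}, a classification matrix $A \in \{0,1\}^{m \times K}$ is fully distinguishing exactly when, for every unordered pair $\{k,k'\} \subseteq [K]$ with $k \neq k'$, there exists some row $i \in [m]$ with $A_{ik} = A_{ik'} = 1$. Equivalently, if we view the $i^\text{th}$ row as the subset $\calY_i = \{j \in [K] : A_{ij} = 1\} \subseteq [K]$, then the family $\{\calY_i\}_{i=1}^m$ must cover every $2$-element subset of $[K]$, in the sense that each pair is contained in at least one $\calY_i$.

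The first step is to count, for a single row $i$, how many pairs of classes it can possibly cover. Since the row has weight at most $R$, i.e.\ $|\calY_i| \leq R$, the number of $2$-element subsets of $[K]$ contained in $\calY_i$ is at most $\binom{|\calY_i|}{2} \leq \binom{R}{2} = \frac{R(R-1)}{2}$. The second step is to count the total number of pairs that must be covered, which is simply $\binom{K}{2} = \frac{K(K-1)}{2}$.

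Combining the two via a union bound over the $m$ rows yields
\[
\binom{K}{2} \;=\; \left|\bigcup_{i=1}^{m} \binom{\calY_i}{2}\right| \;\leq\; \sum_{i=1}^m \binom{|\calY_i|}{2} \;\leq\; m\binom{R}{2}.
\]
Rearranging gives $m \geq K(K-1)/(R(R-1))$, and since $m$ is an integer we may take the ceiling.

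There is no real obstacle here; the content of the bound is entirely packaged in Lemma~\ref{lem:fully_distinguishing}, after which the argument is a one-line pair-counting inequality. The only thing worth being careful about is making the correspondence between rows and subsets $\calY_i$ explicit (as is already set up in Definition~\ref{def:matrix}), so that the phrase ``pairs covered by row $i$'' is unambiguous.
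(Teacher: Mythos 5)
Your proof is correct and follows essentially the same counting argument as the paper: each row of weight at most $R$ covers at most $\binom{R}{2}$ of the $\binom{K}{2}$ pairs that must be covered, giving $m\binom{R}{2} \geq \binom{K}{2}$ and hence the bound after taking the ceiling. Your write-up simply makes the union bound and the integrality step more explicit than the paper's two-sentence version.
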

\begin{proof}
For a classification matrix to be fully distinguishing, it needs to satisfy ${K \choose 2} = K(K-1)/2$ constraints, namely that every pair of columns needs to share a 1 in some row. However, each row that we add to the matrix can satisfy at most ${R \choose 2} = R(R-1)/2$ such constraints, as the maximum weight of each row is $R$.
\end{proof}

The lower bound above is tight at the extreme values of $R$. When $R = 2$, perfect accuracy is achievable with $m = K(K-1)/2$ classifiers, i.e., using all ${K \choose 2}$ possible binary classifiers, then performing a majority vote to make predictions. This is the same as the well known one-vs.-one strategy which decomposes multiclass problems into pairwise binary problems. If $R = K$, then perfect accuracy is trivially achieved using just a single centralized classifier.

The following achievability result yields an upper bound of $m^* = O\Big(\frac{K^2}{R^2}\log K\Big)$ by using a probabilistic argument, where each classifier is drawn independently and uniformly at random from the set of size-$R$ classifiers. \footnote{Throughout, $\log$ denotes the natural (base $e$) logarithm.} Compared to the lower bound in Theorem \ref{thm:perfect_LB}, there is a $\log K$ gap in terms of scaling.  Additionally, we show that such a randomly selected set of classifiers satisfies the covering condition (i.e., yields a classification matrix that is fully distinguishing) with high probability. This result shows that in practice, if one wishes to label a dataset and selects sufficiently many labelers who specialize in random subsets of classes, then with high probability one obtains a reliably labeled dataset using the aforementioned decoding scheme.

\begin{theorem}\label{thm:perfect_UB}
	For all integers $K \geq 2$, there exists an $m \times K$ classification matrix with maximum row weight $R \in \{2,3,\ldots, K\}$ that is fully distinguishing with 
	\[m = \left\lceil\frac{K(K-1)}{R(R-1)}\cdot \log\Bigg(\frac{K(K-1)}{2}\Bigg) + 1\right\rceil. \]
	Moreover, an $m \times K$ classification matrix with each row selected independently and uniformly at random from the set of weight-$R$ binary vectors of length $K$ is fully distinguishing with probability at least $1-\delta$ if 
	\[m = \left\lceil \frac{K(K-1)}{R(R-1)} \cdot \log\Bigg(\frac{K(K-1)}{2\delta}\Bigg) \right\rceil.\]	
\end{theorem}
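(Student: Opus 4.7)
My plan is to prove both statements simultaneously by the probabilistic method, using a union bound over the $\binom{K}{2}$ pairwise distinguishability constraints identified in Lemma~\ref{lem:fully_distinguishing}. Concretely, I would construct the matrix by drawing each of its $m$ rows independently and uniformly at random from the set of length-$K$ binary vectors of weight exactly $R$, and then bound the probability that the resulting matrix fails to be fully distinguishing.

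The first step is to compute, for a fixed pair of columns $k \neq k'$, the probability $p$ that a single random row has ones in both positions $k$ and $k'$. This is a straightforward counting argument: among the $\binom{K}{R}$ weight-$R$ vectors, exactly $\binom{K-2}{R-2}$ have ones in the two prescribed coordinates, so
\[
p = \frac{\binom{K-2}{R-2}}{\binom{K}{R}} = \frac{R(R-1)}{K(K-1)}.
\]
Since rows are independent, the probability that \emph{no} row covers the pair $(k,k')$ is $(1-p)^m$, and by the union bound over all $\binom{K}{2} = K(K-1)/2$ pairs, the probability that the matrix fails to be fully distinguishing is at most
\[
\frac{K(K-1)}{2}(1-p)^m \;\leq\; \frac{K(K-1)}{2}\exp\!\Bigl(-m\cdot\frac{R(R-1)}{K(K-1)}\Bigr),
\]
where I use the standard inequality $1-x \leq e^{-x}$.

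The second step handles the high-probability (second) statement directly: solving the right-hand side $\leq \delta$ for $m$ gives $m \geq \frac{K(K-1)}{R(R-1)}\log\!\bigl(\frac{K(K-1)}{2\delta}\bigr)$, matching the stated bound after taking the ceiling. For the existence (first) statement, I would choose $m$ slightly larger so that the failure probability is strictly less than $1$; with the stated $m = \lceil \frac{K(K-1)}{R(R-1)}\log(\frac{K(K-1)}{2}) + 1\rceil$ the bound above becomes $\exp(-R(R-1)/(K(K-1))) < 1$, so the probabilistic method guarantees that at least one realization of the random matrix is fully distinguishing, proving existence of a deterministic construction.

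I do not expect any serious obstacle: the argument is a clean probabilistic-method / coupon-collector-style union bound, and the only quantitative ingredients are the exact pair-covering probability $R(R-1)/K(K-1)$ and the inequality $1-x \leq e^{-x}$. The mildly delicate point is being careful that each row has weight \emph{exactly} $R$ (rather than at most $R$) so that the resulting random matrix satisfies the maximum-row-weight constraint with probability one, and that the ceilings and the ``$+1$'' in the existence bound are just enough to push the union-bound estimate strictly below $1$.
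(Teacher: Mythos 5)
Your proposal is correct and follows essentially the same route as the paper's own proof: a union bound over the $\binom{K}{2}$ column pairs, the pair-covering probability $\binom{K-2}{R-2}/\binom{K}{R} = R(R-1)/(K(K-1))$, the inequality $1-x\leq e^{-x}$, and then solving for $m$ to get the failure probability below $1$ (existence) or below $\delta$ (high-probability statement). Your explicit check that the ``$+1$'' pushes the bound to $\exp(-R(R-1)/(K(K-1)))<1$ is a nice touch the paper leaves implicit, but the argument is the same.
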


Resolving the gap between the bounds in Theorem~\ref{thm:perfect_LB} and Theorem~\ref{thm:perfect_UB} is an open problem. We conjecture that the upper bound in Theorem~\ref{thm:perfect_UB} gives sub-optimal scaling, and that the $\log K$ factor can be eliminated. In many combinatorial problems, such $\log$ factors reflect inefficiencies in the sampling procedure. Likewise, the first result in Theorem~\ref{thm:perfect_UB} is proved by (inefficiently) sampling classifiers uniformly at random with replacement. In the next section, we give greedy algorithms for selecting a minimal set of local classifiers which form a fully distinguishing classification matrix, i.e., satisfy the covering condition. As we will see, one of these algorithms produces a set of local classifiers which is guaranteed to be within a factor of $O(\log R)$ to optimality.

\section{Algorithms via Set Covering and Clique Covering}\label{sec:setcover}
The proof of Theorem \ref{thm:perfect_UB} relied on a random configuration of local classifiers to prove the existence of a configuration which satisfies the covering condition. In practice, however, it would be useful to have a more deterministic approach.

\subsection{Set Covering}
The problem of achieving perfect accuracy turns out to be a special case of the well-known \textit{set cover} problem from the combinatorics and theoretical computer science literatures, originally introduced and shown to be NP-complete in 1972 \cite{karp1972reducibility}. 
The set cover problem consists of 
\begin{enumerate}
\item  the \textit{universe}: a set of elements $\calU = \{1,2,\ldots, n\}$, and
\item a collection of sets $\calS$ whose union equals the universe, i.e., $\underset{S \in \calS}{\cup} S = \calU$. 
\end{enumerate}
The goal is to identify the smallest sub-collection of $\calS$ whose union equals the universe. Our classification problem can be reformulated in these terms by setting $\calU$ to be the set of all $n = {K \choose 2}$ pairwise connections between classifiers and setting $\calS$ to be the set of all ${K \choose R}$ possible sets of ${R \choose 2}$ pairwise connections that can be made in a single row of the classification matrix. Designing the classification matrix to have as few rows as possible while satisfying the conditions for perfect accuracy is equivalent to finding the smallest sub-collection of $\calS$ whose union equals $\calU$.

The following greedy algorithm \cite{chvatal1979greedy} gives a polynomial time (in $n \cdot |\calS|$) approximation of set covering. In each iteration, the algorithm simply chooses the set in $\calS$ that contains the largest number of yet-uncovered elements of $\calU$, and adds this set to the partial cover. This step is repeated until the union of the selected subsets covers $\calU$. For example, if $K=5$, $R=3$ in the classification problem, then the algorithm provides a set covering defined by $\calY_1 = \{1, 2, 3\}, \, \calY_2 = \{1, 4, 5\}, \, \calY_3 = \{2, 3, 4\}, \, \calY_4 = \{2, 3, 5\}$. This set of classifiers satisfies the covering condition from Corollary \ref{cor:covering}, and thus admits perfect accuracy under our model. 

This algorithm identifies a set covering that is at most $\tilde{H}(n)$ times as large as the optimal covering, where $\tilde{H}(n)$ is the $n^{\text{th}}$ harmonic number given by 	\[\tilde{H}(n) = \Sum{i=1}{n} \frac{1}{i} \leq \log n + 1.\]
In fact, if $|S| \leq \rho, \, \forall S \in \calS$, then the ratio is improved to $\tilde{H}(\rho)$ \cite{johnson1974approximation,  lovasz1975ratio}.
In the classification problem, we have $\rho = {R \choose 2}$, so the greedy algorithm finds a perfect-accuracy classifier configuration with at most 
$\tilde{H}({R \choose 2}) \approx 2\log R$
times as many classifiers as the optimal configuration. If $R$ is small (which is the expected regime in many applications), then the algorithm is nearly optimal. Numerically, one can also verify that the sizes of the covers produced by the algorithm nearly match the lower bound in Theorem \ref{thm:perfect_LB} in the small $R$ regime.

\subsection{Clique Covering}\label{subsec:clique_covering}
We note that the greedy algorithm runs in polynomial time \textit{in the parameters of the set cover problem}, $n$ and $|\calS|$, but that this translates to exponential time in $K$ and $R$. However, more efficient algorithms present themselves when we rephrase our problem in graph-theoretic terms. Consider an undirected graph $G = (V,E)$ with vertex set $V = \{1,\ldots, K\}$ and edge set $E$ where for every $k,k' \in [K], \, k \neq k'$, we have $(k,k') \in E$ if and only if $k,k' \in \calY_i$ for some $i \in [m]$. 
Thus, each classifier creates a \textit{clique} on the graph. Moreover, a configuration of classifiers can achieve perfect accuracy if and only if their induced graph, $G$, is the \textit{complete graph} \footnote{While the standard definition of a complete graph requires all pairs of vertices to be connected by a unique edge, we allow for redundant edges due to overlapping cliques.} on $K$ vertices. Our problem can be equivalently phrased as: What is the minimum number of cliques of size $R$ needed to cover the edges of the complete graph? 
This is a special case of the \textit{$k$-clique covering problem}, which was shown in \cite{holyer1981np} to be NP-complete in the general case. It is readily seen that clique covering is in turn a special case of set covering. This connection was studied in \cite{goldschmidt1996approximation}, and approximation algorithms which give better worst-case running times than greedy set covering -- at the expense of approximation ratio -- were provided. For simplicity, we use the standard greedy set covering algorithm in our experiments in Section \ref{sec:experiments}, but one can alternatively use the more efficient algorithms from \cite{goldschmidt1996approximation}.

\subsection{Variations on Set Covering for Dataset Construction and Multiclass Adaptation }
By setting appropriate inputs to a set cover algorithm, we can handle two types of situations that may arise in the context of dataset construction or multiclass classification. Consider a setting with $m$ classifiers or data labelers, denoted by $f_i: \calX \to \calY_i, \, i \in [m]$. First, suppose the cliques corresponding to the initial collection $\calY_1,\ldots,\calY_m$ fail to cover the edges of the complete graph on $K$ vertices, and that we are interested in determining the smallest number of \textit{additional} local classifiers (cliques) that need to be obtained to form a complete cover. In this case, we can set the universe $\calU$ to be the set of uncovered edges, and set $\calS$ to be the cliques that have not yet been used. By running a set cover algorithm on this problem instance, we obtain a minimal set of additional cliques needed to form a complete cover. In practice, the central orchestrator may leverage this information to seek out human labelers with the desired missing expertise, or to determine which additional local classifiers need to be trained to adapt an existing set of classifiers to a larger multiclass objective. 

Second, suppose there are \textit{more} local classifiers than are needed to cover the complete graph. This is likely to be the case in large-scale classification or crowdsourcing settings with numerous participating entities. To minimize costs, the central orchestrator may be interested in selecting a minimal subset of these classifiers which still covers the graph, and asking only the corresponding ones to send their labels. Here, we can set $\calS$ equal to the cliques available in the pool of classifiers, and keep $\calU$ as the set of edges in the complete graph on $K$ vertices. The set cover algorithm will then return a (close to) minimal subset of $\calS$ which still covers the complete graph. 

\section{Statistical Setting}\label{sec:statistical}
The perfect accuracy setting from Section~\ref{sec:perfect} was combinatorial in nature and led to worst-case bounds on the number of local classifiers needed to exactly recover the true class. We now investigate the scaling of $m^*$ in a more average-case sense, 
as described in Section~\ref{subsec:statistical}. In practice, classifiers are likely to lie somewhere between these two noise models. In the statistical setting, we are particularly interested in whether one can still achieve a high classification accuracy with fewer classifiers than in the entire set cover. This problem is relevant, for example, when it is  difficult to ensure that all of the set-covering classifiers are available, or when one is willing to sacrifice a small amount of accuracy for the sake of reducing data labeling expenditures. 

\subsection{Lower Bound}
We first give an information-theoretic lower bound on $m^*$ using Fano's inequality. Our proof will rely on the following lemma, which gives an expression for the conditional entropy of the classifier outputs, $Y$, given the true class, $Z$. 

\begin{lemma}\label{lemma:entropy}
	Under the assumptions of Section~\ref{subsec:statistical}, the conditional entropy of $Y = (Y_1,\ldots, Y_m)$ given $Z$ is \[H(Y\,|\,Z) = m\cdot \frac{(K-R)}{K}\cdot \log R. \]
\end{lemma}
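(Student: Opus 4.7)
The plan is to reduce the joint conditional entropy to a sum of individual conditional entropies using the conditional independence assumption, then compute each summand directly from the definition of the noise model.

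First I would invoke conditional independence: under the statistical model of Section~\ref{subsec:statistical}, the classifier outputs $Y_1,\dots,Y_m$ are conditionally independent given $Z$, so
\[
H(Y\,|\,Z) \;=\; \sum_{i=1}^m H(Y_i\,|\,Z).
\]
Next I would compute each $H(Y_i\,|\,Z)$ by conditioning on the value of $Z$ and using the uniform prior $\pi(k) = 1/K$. From \eqref{eqn:model}, if $Z = k \in \calY_i$ then $Y_i = k$ deterministically, contributing zero entropy; if $Z = k \notin \calY_i$ then $Y_i$ is uniform on $\calY_i$, a set of size $R$ (recall that in this section we are considering classifiers all of size $R$), contributing entropy $\log R$. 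Therefore
\[
H(Y_i\,|\,Z) \;=\; \sum_{k \in [K]} \pi(k)\, H(Y_i \,|\, Z = k) \;=\; \frac{1}{K}\sum_{k \notin \calY_i} \log R \;=\; \frac{K - R}{K}\cdot \log R,
\]
since exactly $K - |\calY_i| = K - R$ classes lie outside $\calY_i$.

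Summing over $i \in [m]$ gives the claimed identity $H(Y\,|\,Z) = m\cdot \frac{K-R}{K}\cdot \log R$. I do not anticipate a genuine obstacle here; the only subtlety is to make sure we are using $|\calY_i|=R$ uniformly (rather than $|\calY_i|\le R$) and that the uniform prior on $Z$ is what allows each of the $K$ values of $k$ to be weighted equally in the average. As an alternative route, one could compute $H(Y_i\,|\,Z)$ via the total probability decomposition of $Y_i$ without conditioning first, but the conditional-on-$Z=k$ approach is cleanest because each conditional distribution of $Y_i$ is either a point mass or uniform on $\calY_i$.
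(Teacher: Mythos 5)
Your proposal is correct and follows essentially the same argument as the paper: both use conditional independence given $Z$ to split $H(Y\,|\,Z)$ into single-classifier terms, the uniform prior on $Z$, and the observation that $H(Y_i\,|\,Z=k)$ is $\log R$ or $0$ according to whether $k\notin\calY_i$ or $k\in\calY_i$. The only difference is cosmetic: you sum over classes within each classifier and then over classifiers, whereas the paper sums over classes first and counts the $m(K-R)$ pairs $(k,i)$ with $k\notin\calY_i$ all at once.
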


Next, we show that $m^* = \Omega((K\log K)/(R\log R))$, which hints that the more benign conditions of this statistical setting may lead to roughly a factor of $\frac{K}{R}$ reduction (up to $\log$ factors) in the number of classifiers that are required compared to the perfect accuracy setting. 

\begin{theorem}\label{thm:stat_LB}
	Any $K$-class classification scheme using $m$ smaller classifiers of size $R$ that achieves $P_e \leq \epsilon$ under the assumptions of Section~\ref{subsec:statistical} must satisfy \[m \geq \left\lceil \frac{K}{R}\cdot \Bigg(\frac{(1-\epsilon)\log K - \log(2)}{\log R} \Bigg) \right\rceil.\]
\end{theorem}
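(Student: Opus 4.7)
The plan is to combine a standard Fano-style lower bound on $H(Z\,|\,Y)$ with an upper bound on the mutual information $I(Z;Y) = H(Y) - H(Y\,|\,Z)$ that invokes Lemma~\ref{lemma:entropy}. Under the uniform prior, $H(Z) = \log K$, so any sharp upper bound on $I(Z;Y)$ forces $H(Z|Y)$ to be large, which via Fano forces $P_e$ to be large unless $m$ is large.

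First, I would apply Fano's inequality in the form $H(Z\,|\,Y) \leq h(P_e) + P_e\log(K-1)$, where $h(\cdot)$ denotes the binary entropy (in nats). Using the crude bounds $h(P_e) \leq \log 2$ and $\log(K-1) \leq \log K$ together with $P_e \leq \epsilon$ gives
\[ H(Z\,|\,Y) \leq \log 2 + \epsilon\log K, \]
so that
\[ I(Z;Y) = H(Z) - H(Z\,|\,Y) \geq (1-\epsilon)\log K - \log 2. \]

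Next, I would bound $I(Z;Y)$ from above. By subadditivity of entropy and the fact that each $Y_i$ takes values in a set of size $R$, we have $H(Y) \leq \sum_{i=1}^{m} H(Y_i) \leq m\log R$. Combining this with the closed form $H(Y\,|\,Z) = m\cdot \frac{K-R}{K}\cdot \log R$ from Lemma~\ref{lemma:entropy}, we obtain
\[ I(Z;Y) \leq m\log R - m\cdot \frac{K-R}{K}\log R = \frac{mR}{K}\log R. \]
Chaining with the Fano lower bound yields $\frac{mR}{K}\log R \geq (1-\epsilon)\log K - \log 2$, and rearranging for $m$ (and taking the ceiling to enforce integrality) produces exactly the inequality claimed in the theorem.

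The argument is the textbook Fano converse, so no step is particularly difficult; the main subtlety is deciding which form of Fano to use, since the cleaner bound $h(P_e) \leq \log 2$ is what produces the $-\log 2$ term appearing in the theorem statement rather than something more refined involving $h(\epsilon)$. What really does the work here is Lemma~\ref{lemma:entropy}, whose clean closed form depends crucially on the conditional independence of the classifier outputs and the uniform noise assumption of Section~\ref{subsec:statistical}; without those hypotheses, $H(Y\,|\,Z)$ would not decouple across classifiers and the subadditivity-based $m\log R$ upper bound on $H(Y)$ would fail to be tight enough.
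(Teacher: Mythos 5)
Your proposal is correct and follows essentially the same route as the paper: an upper bound $I(Z;Y)\leq \frac{mR}{K}\log R$ via Lemma~\ref{lemma:entropy} combined with a Fano converse giving $I(Z;Y)\geq (1-\epsilon)\log K-\log 2$. The only cosmetic difference is that you apply Fano directly to $H(Z\,|\,Y)$ (and bound $H(Y)$ by subadditivity), while the paper applies Fano to $H(Z\,|\,g(Y))$ and invokes the data processing inequality explicitly (and bounds $H(Y)$ by $\log|\calY|$); these are equivalent.
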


\subsection{Upper Bound}
To prove an upper bound on the number of classifiers required to achieve an $\epsilon$-probability of error, we consider the same probabilistic construction as in Theorem~\ref{thm:perfect_UB}, coupled with a maximum likelihood (ML) decoding strategy. 
For a particular output vector $y=(y_1,\ldots,y_m)$, the decoder predicts the class according to the decision rule  
\[g(y) = \underset{k\in[K]}{\text{argmax}} \; \calL(y;k) \]
where 
\[\calL(y;k) = \Prod{i=1}{m}\PP(Y_i=y_i \, | \, Z=k)\]
and with ties broken arbitrarily.

\textbf{Remark:}  The above ML decoding scheme potentially generalizes to the setting in which we have local classifiers that are \textit{not perfect} but whose respective accuracies (or reliabilities, in the case of human labelers) can be estimated. For example, one could use an expectation-maximization (EM) \cite{dempster1977maximum} based algorithm to estimate these quantities, similar to those proposed in several papers on crowdsourcing \cite{dawid1979maximum, jin2003learning, raykar2010learning, sheng2008get}. 

\begin{theorem}\label{thm:stat_UB}
	Under the assumptions of Section~\ref{subsec:statistical}, the previously described classifier construction and decoding rule achieve a probability of error bounded by an arbitrary $\epsilon \in (0,1)$ using \[m = \left\lceil \frac{K(K-1)}{(K-R)(R-1)}\cdot \log\Big(\frac{K}{\epsilon}\Big) \right\rceil\] classifiers of size $R$. 
\end{theorem}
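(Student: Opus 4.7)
The plan is to analyze the ML decoder applied to the random classifier construction of Theorem~\ref{thm:perfect_UB}, in which the $m$ subsets $\calY_1,\dots,\calY_m$ are drawn i.i.d.\ uniformly from the $\binom{K}{R}$ size-$R$ subsets of $[K]$. I would bound the probability of error averaged over the classifier draws as well as over the class/noise randomness, so that a standard averaging argument produces a deterministic classifier set achieving $P_e \leq \epsilon$. Because the prior on $Z$ is uniform and the joint law is invariant under class relabeling, it suffices to fix an arbitrary true class $k$ and apply a union bound
\[ P_e \;\leq\; \PP(\hat Z \neq k \mid Z = k) \;\leq\; \sum_{k' \neq k} \PP(\hat Z = k' \mid Z = k). \]

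The key step is to identify a sufficient condition under which the ML decoder cannot output a particular competitor $k' \neq k$. Since $\calL(y;k) > 0$ whenever $Z = k$, the event $\{\hat Z = k'\}$ requires $\calL(y;k') > 0$, and $\calL(y;k') = 0$ as soon as some classifier $i$ with $k' \in \calY_i$ reports $y_i \neq k'$. I would isolate the simpler per-classifier event
\[ E_i \;\triangleq\; \{\,k \notin \calY_i,\ k' \in \calY_i,\ y_i \neq k'\,\}, \]
whose occurrence at any single $i$ implies $\calL(y;k') = 0$, so $\{\hat Z = k'\} \subseteq \bigcap_{i=1}^m E_i^c$. Factoring the per-classifier probability into a subset-sampling contribution and a conditional uniform-noise contribution yields
\[ p^* \;\triangleq\; \PP(E_i \mid Z = k) \;=\; \frac{R(K-R)}{K(K-1)} \cdot \frac{R-1}{R} \;=\; \frac{(K-R)(R-1)}{K(K-1)}. \]

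Since the pairs $(\calY_i, y_i)$ are mutually independent across $i$, this gives $\PP(\hat Z = k' \mid Z = k) \leq (1-p^*)^m \leq e^{-m p^*}$, and summing over the $K-1$ competitors produces $P_e \leq K e^{-m p^*}$. Solving $K e^{-m p^*} \leq \epsilon$ for $m$ reproduces the claimed bound $m \geq \frac{K(K-1)}{(K-R)(R-1)} \log(K/\epsilon)$.

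The main subtlety is the choice of $E_i$: it should (i) decouple cleanly across classifiers---here it factors into a subset-sampling piece and a uniform-noise piece---and (ii) carry the right probability to yield the stated constant. Once $E_i$ is in hand, the remainder of the argument (mutual independence of the $m$ classifiers, the elementary inequality $(1-x)^m \leq e^{-mx}$, and the union bound over $k'$) is routine; the gain over the adversarial bound of Theorem~\ref{thm:perfect_UB} comes from the extra factor $(R-1)/R$ in $p^*$, which encodes the chance that the uniform noise rejects $k'$ and is precisely what is absent from the worst-case analysis.
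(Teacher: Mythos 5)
Your proposal is correct and follows essentially the same route as the paper's proof: your event $E_i = \{k \notin \calY_i,\ k' \in \calY_i,\ y_i \neq k'\}$ is exactly the paper's $B_i \cap C_i^c$, the per-classifier probability $\tfrac{(K-R)(R-1)}{K(K-1)}$ matches, and the independence, union-bound, and $1-x \leq e^{-x}$ steps are identical. The only (cosmetic) difference is that you compute $\PP(E_i)$ directly rather than manipulating complements of $B_i$ and $C_i$ separately.
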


For fixed $\epsilon$, the above result gives $m^* = O((K/R)\log K)$ when $R$ is sufficiently smaller than $K$, which is only a factor of $\log R$ larger than our bound in Theorem~\ref{thm:stat_LB} in terms of scaling. When $R = O(1)$, then the upper and lower bounds meet, yielding $m^* = \Theta(K\log K)$. On the other hand, when $R = K^\alpha$ for some $\alpha \in (0,1)$, then the lower bound scales as $\Omega(K)$, whereas the upper bound scales as $O(K\log K)$. 

When combined, Theorems~\ref{thm:stat_LB} and \ref{thm:stat_UB} reveal that relaxing the criteria for perfect accuracy yields a reduction in the minimum number of required classifiers from roughly $\Theta(K^2/R^2)$ to $\Theta(K/R)$. Note that $\ceil{K/R}$ is the minimum number of size-$R$ cliques needed to cover $K$ vertices. Therefore, under the graph-theoretic interpretation given in Section~\ref{subsec:clique_covering}, the problem now roughly reduces to covering the vertices rather than the edges of the complete graph on $K$ vertices.

\section{Experiments}\label{sec:experiments}
We present experimental results on the performance of our aggregation scheme applied to classifiers trained on subsets of a global dataset. For different classifier sizes $R$, we used the greedy set cover algorithm to design configurations of smaller classifiers. For example, for $K=10$ and $R=4$, we trained 9 smaller classifiers, each given access to all training examples corresponding to 4 classes. We used the decoding scheme given in Equation (\ref{eqn:decoder}), Section \ref{subsec:decoder}. We examined the performance of this scheme on the MNIST \cite{lecun1998gradient} and CIFAR-10 \cite{krizhevsky2009learning} datasets, comparing the resulting classification accuracy for $R \in \{4,6,8\}$ to that of one-versus-one ($R=2$) and fully centralized classification ($R=10$). All implementations were done with Keras \cite{chollet2015keras}, and our experiments on \textit{FederatedAveraging} \cite{mcmahan2017communication} additionally utilized the TensorFlow Federated framework \footnote{\url{https://www.tensorflow.org/federated}.}. 
All hyperparameters were set to their default values. 

\subsection{Set Covering-Based Classification}

\subsubsection{MNIST}\label{subsec:mnist_experiment}
For the MNIST handwritten digit dataset, we used a convolutional neural network (CNN) architecture \footnote{\url{https://keras.io/examples/mnist_cnn/}.}. All classifiers were trained with the same architecture, except with possibly different dimensions in the final softmax layer. The batch size was set to 128, and training was done over 12 epochs per classifier. Table~\ref{table:mnist} shows the resulting training and testing accuracies. We see that our aggregation scheme performs nearly as well as the centralized classifier. 

\subsubsection{CIFAR-10}\label{subsec:cifar_experiment}
For the CIFAR-10 dataset, we used the ResNet20 v1 \footnote{\url{https://keras.io/examples/cifar10_resnet/}.} \cite{he2016deep} architecture for each classifier, with a batch size of 32 and 200 epochs. Table~\ref{table:cifar} 
shows the final training and testing accuracies, again demonstrating the favorable performance of our scheme.

\begin{table}[t]
	\centering 
	\caption{MNIST accuracies obtained by aggregating local classifiers designed by greedy set covering. }	
	\label{table:mnist}
	\begin{scriptsize}	
	\begin{tabular}{ lccccr }
		\toprule
		$R = $& $2$ & $4$ & $6$ & $8$ & $10$ \\
		\bottomrule 
		\toprule 
		Train (\%) & 99.71 & { 99.82} & { 99.74} & 99.78 & 99.12 \\
		\midrule 
		Test (\%) & 98.98 & { 99.03} & 98.97 & { 99.07} & 99.26 \\
		\bottomrule 
	\end{tabular}
	\end{scriptsize}
\end{table}

\begin{table}[t]
	\centering 
	\caption{CIFAR-10 accuracies obtained by aggregating local classifiers designed by greedy set covering. }	
	\label{table:cifar}
	\begin{scriptsize}
	\begin{tabular}{ lccccr }
		\toprule 
		$R=$ & $2$ & $4$ & $6$ & $8$ & $10$\\
		\bottomrule 
		\toprule  
		Train (\%) & 99.61 & {99.49}  & { 99.55}  &  { 99.16} & 98.48  \\
		\midrule 
		Test (\%) & 88.36 & { 90.82}  & { 91.95}  &  { 91.99} & 91.98 \\
		\bottomrule 
	\end{tabular}
	\end{scriptsize}
\end{table}

\subsection{Comparison to FederatedAveraging}\label{sec:fedavg}

\begin{figure*}[t]
	\centering
	\subfigure[2 clients per round]{\includegraphics[width=0.24\textwidth]{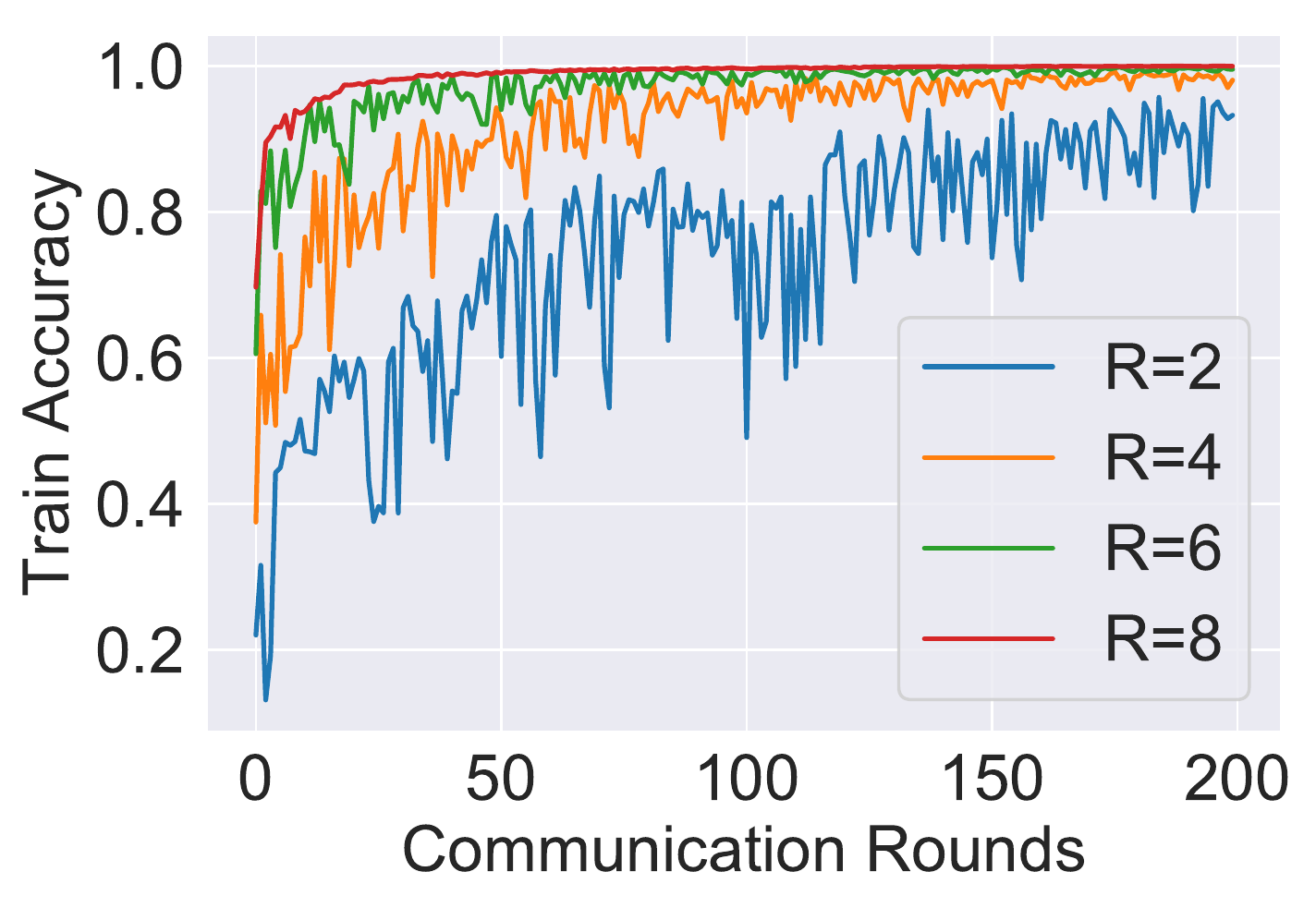}}
	\subfigure[2 clients per round]{\includegraphics[width=0.24\textwidth]{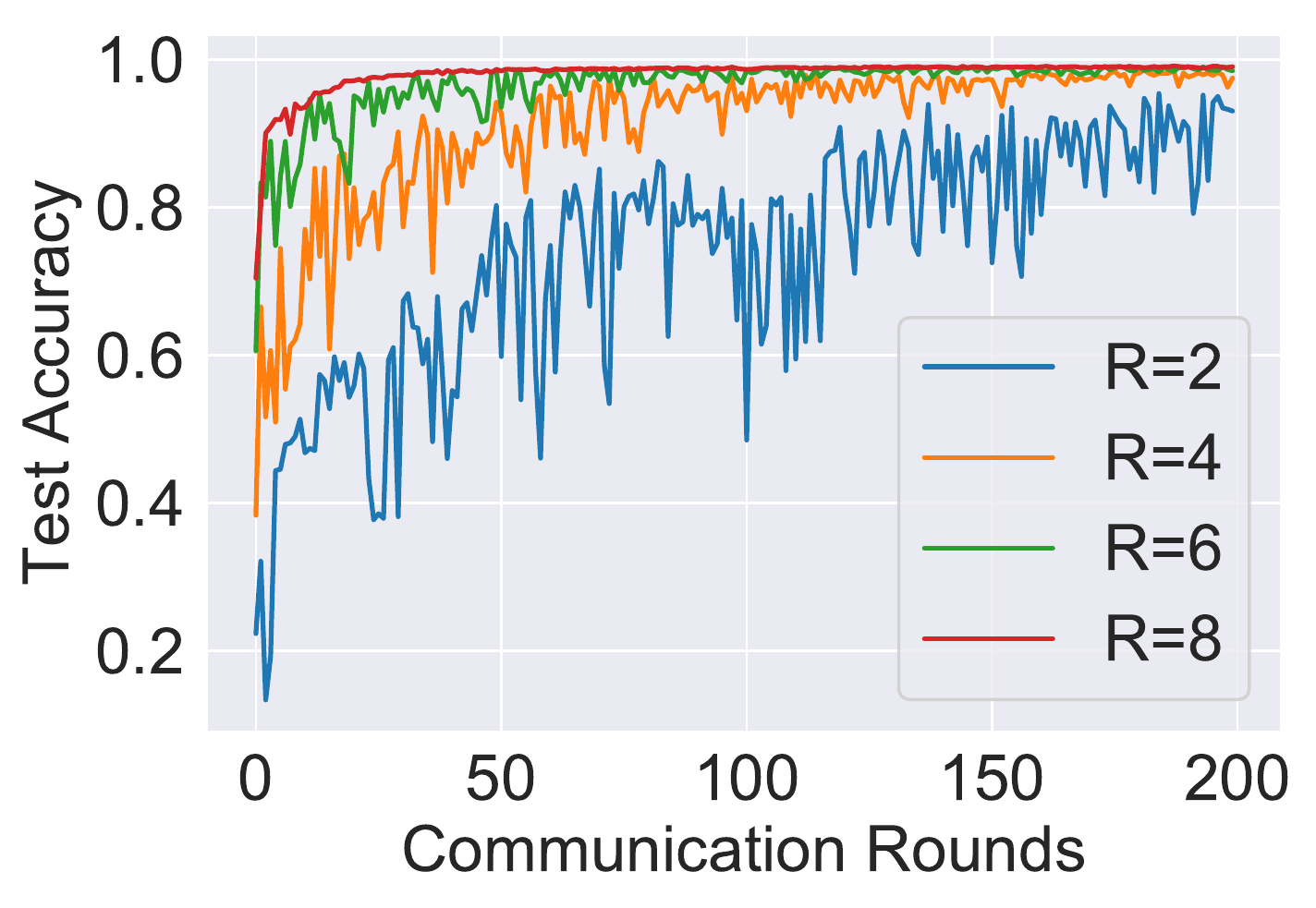}}
	\subfigure[Full client participation]{\includegraphics[width=0.24\textwidth]{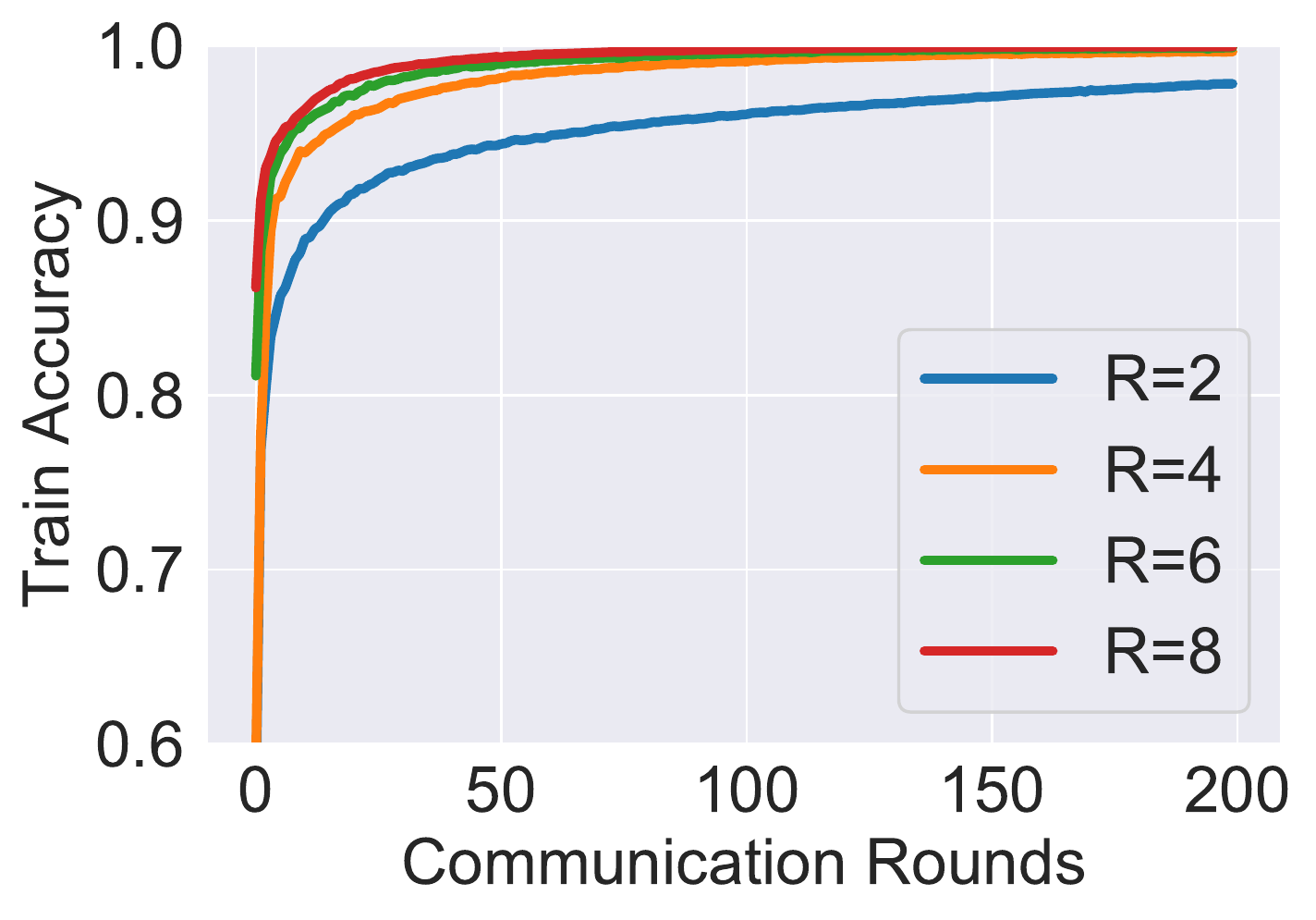}}
	\subfigure[Full client participation]{\includegraphics[width=0.24\textwidth]{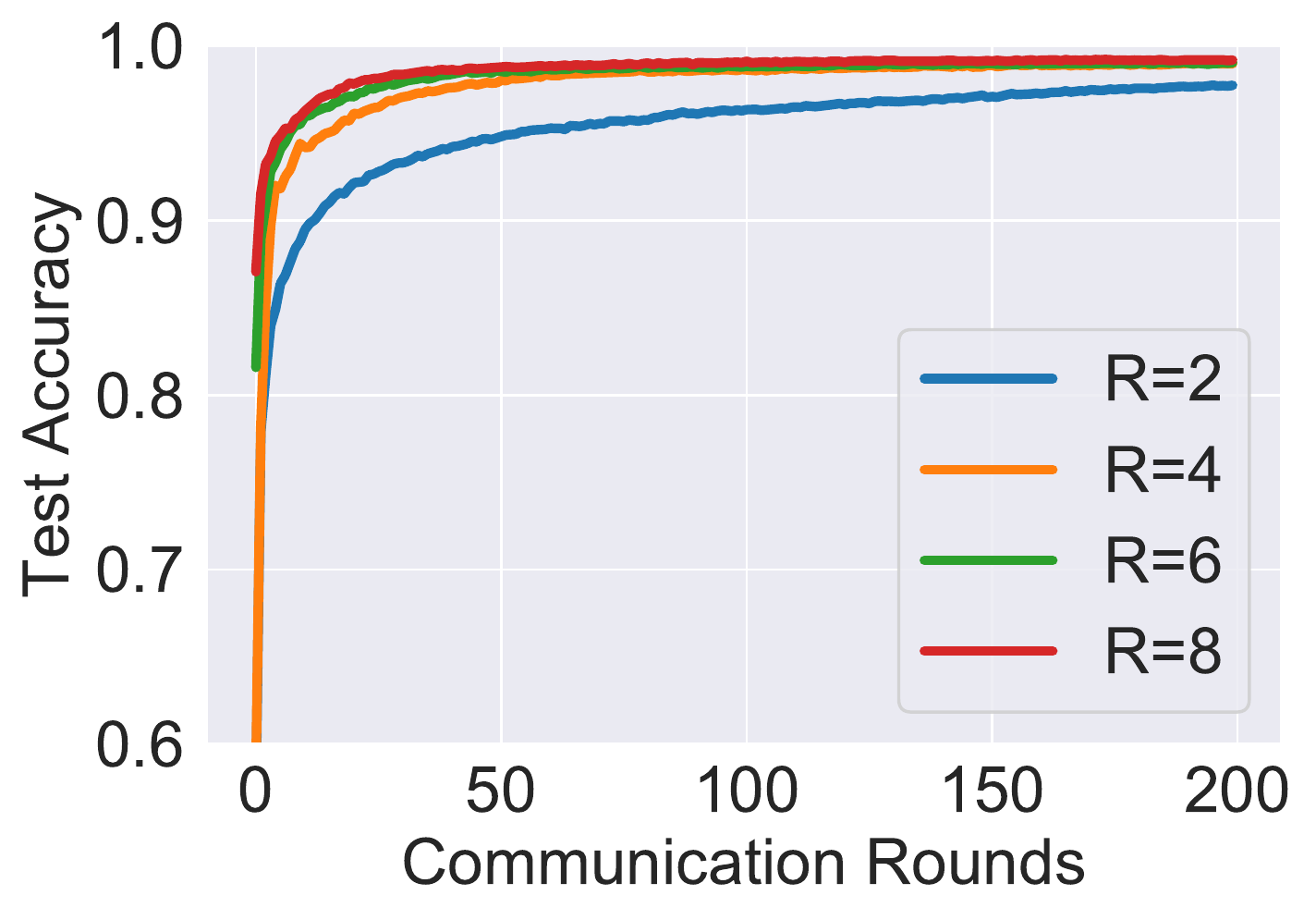}}
	\caption{Learning curves for \textit{FedAvg} on non-i.i.d. MNIST data partitions. (a) and (b): 2 clients selected uniformly at random to participate in each communication round. (c) and (d): All clients participate in each round.}
	\label{fig:fed_mnist}
\end{figure*}

\begin{figure*}[t]
	\centering
	\subfigure[2 clients per round]{\includegraphics[width=0.24\textwidth]{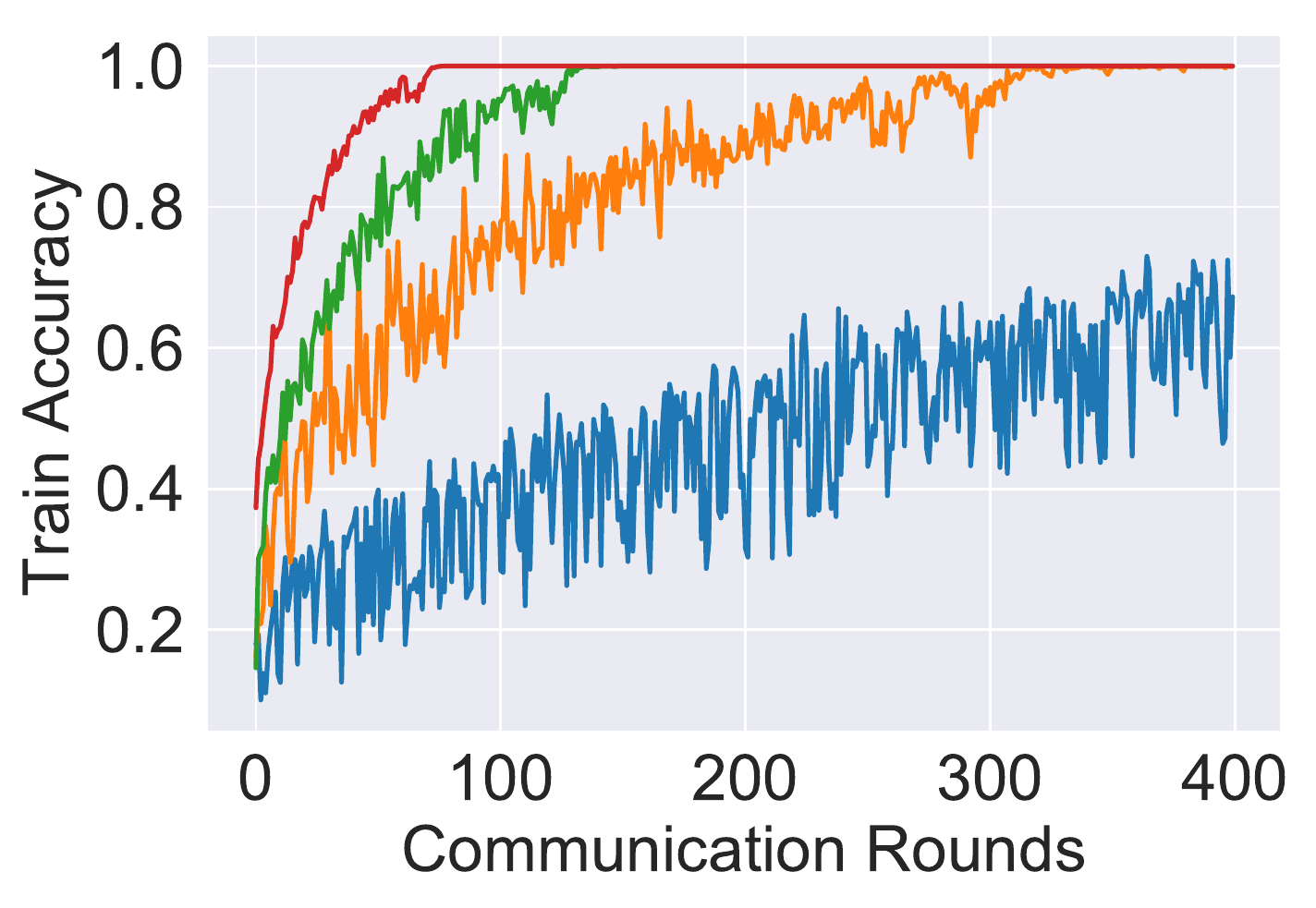}}
	\subfigure[2 clients per round]{\includegraphics[width=0.24\textwidth]{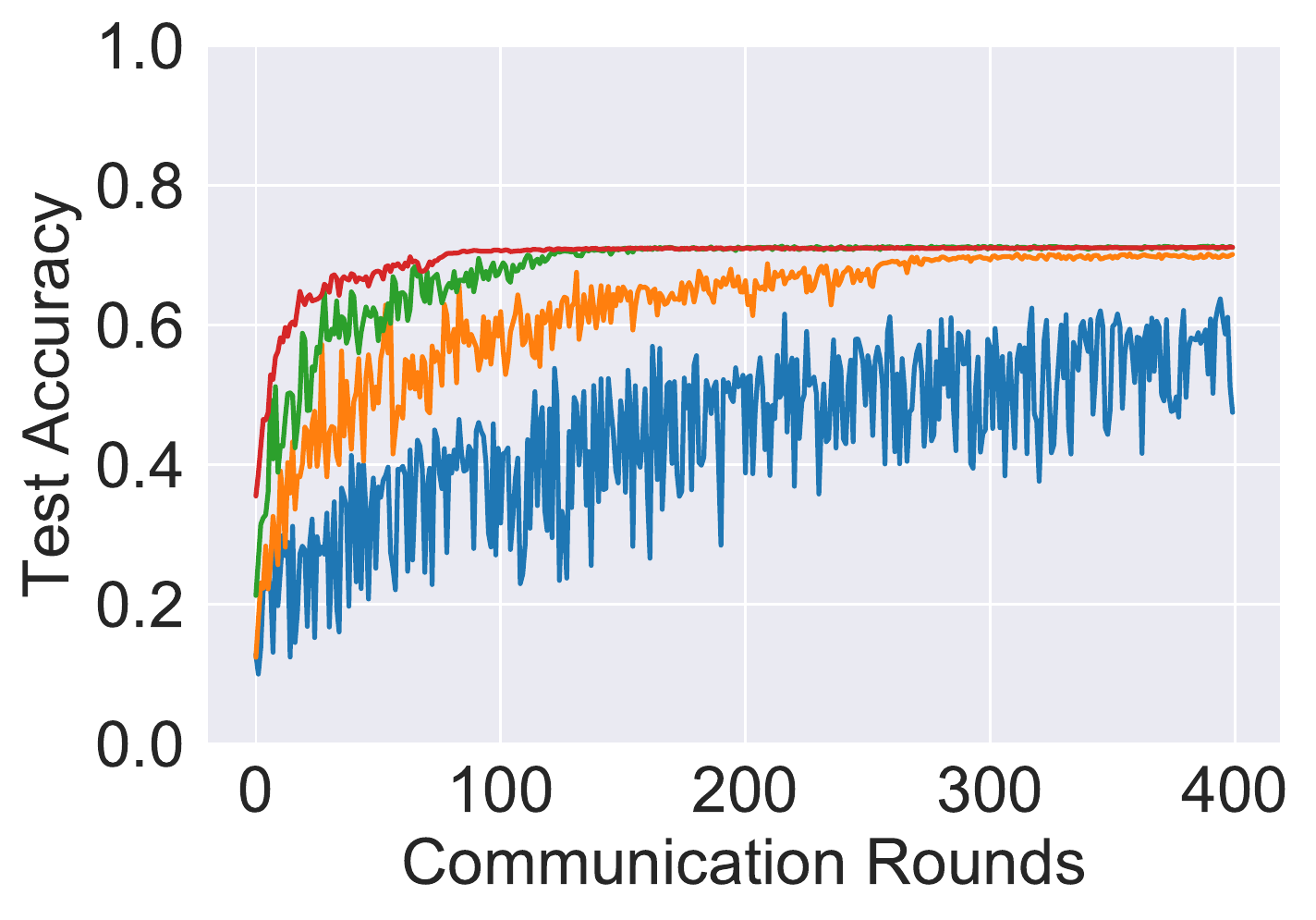}}
	\subfigure[Full client participation]{\includegraphics[width=0.24\textwidth]{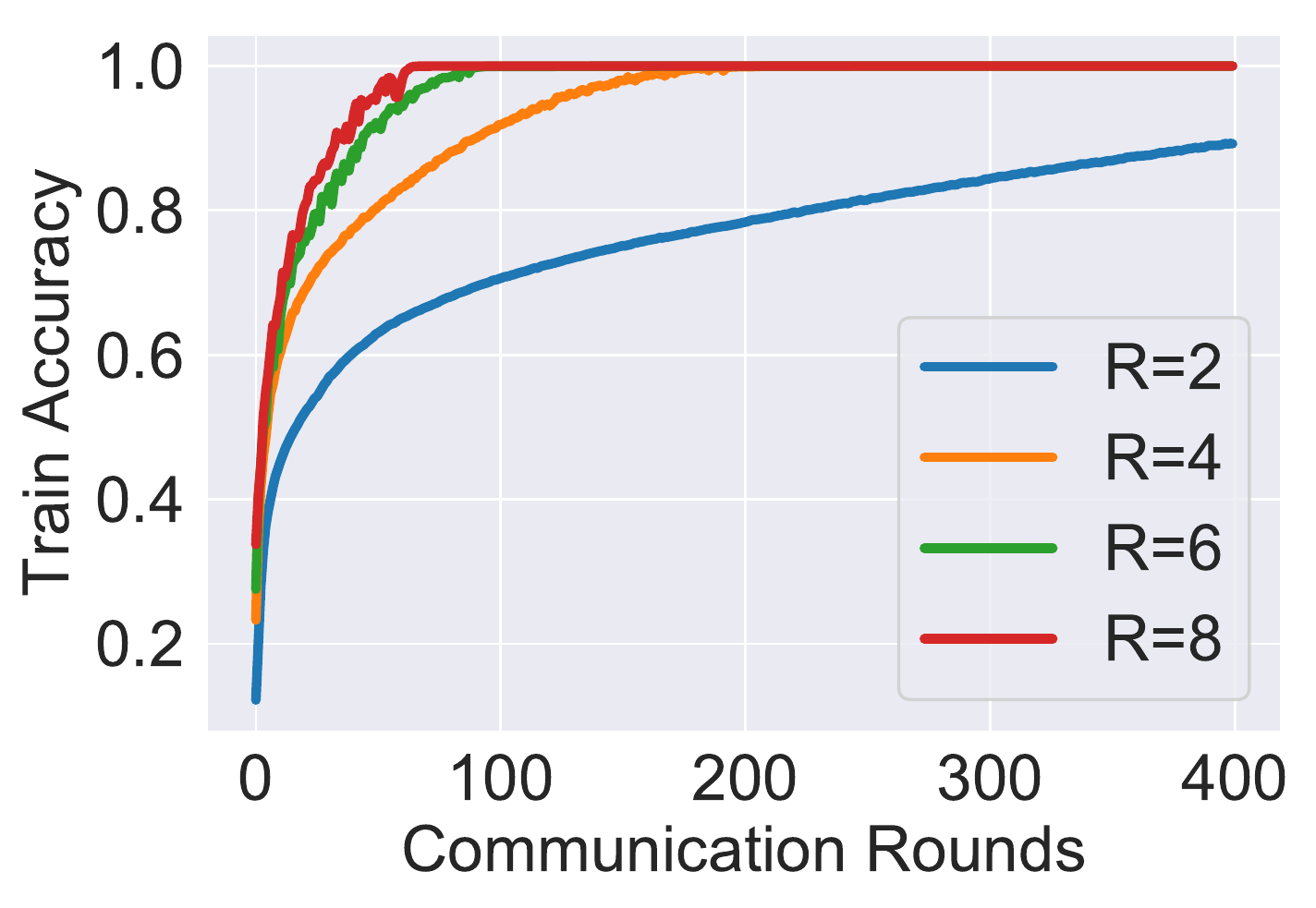}}
	\subfigure[Full client participation]{\includegraphics[width=0.24\textwidth]{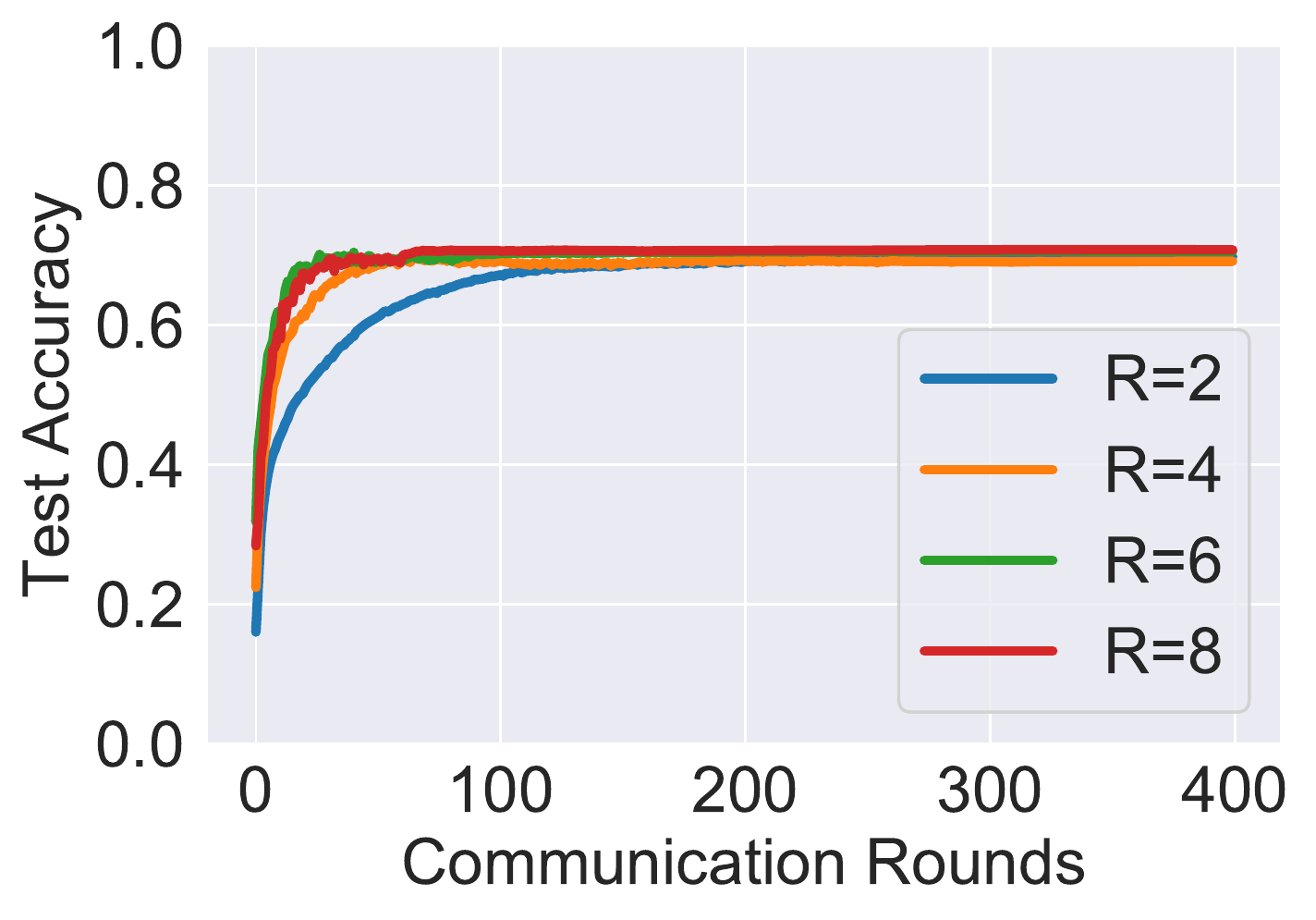}}
	\caption{Learning curves for \textit{FedAvg} on non-i.i.d. CIFAR-10 data partitions. (a) and (b): 2 clients selected uniformly at random to participate in each communication round. (c) and (d): All clients participate in each round.}
	\label{fig:fed_cifar}
\end{figure*}

Our approach is potentially relevant to the new area of \textit{cross-silo} federated learning \cite{kairouz2019advances} where the goal is to learn a global model from decentralized data located at different silos, such as hospitals. We further elaborate on this connection in Section~\ref{sec:discussion}. The following experiments explore whether our approach of aggregating local models that may have been produced by separate silos can outperform standard federated learning procedures. 

We implemented the standard \textit{FederatedAveraging} (or \textit{FedAvg}) algorithm on non-i.i.d. partitions of the MNIST and CIFAR-10 datasets. For each value of $R \in \{2,4,6,8\}$, we partitioned the training data according to the corresponding set cover produced by the greedy algorithm. For example, $R=2$ corresponds to 45 total clients, each with the training data for two classes. In each communication round, each client performed 1 epoch over their own data.

\subsubsection{MNIST} 
The batch size was set to 128 and the number of communication rounds was 200. The model architecture was the same as in Section~\ref{subsec:mnist_experiment}. We first examined the performance of \textit{FedAvg} when two clients are selected uniformly at random to participate in each communication round. The resulting train and test accuracy curves are plotted in Figure~\ref{fig:fed_mnist} (a) and (b). Evidently, the learning process is slower and more volatile for smaller values of $R$ (i.e., more clients). After 100 communication rounds, for instance, the test accuracy for $R=2$ was 48.5\%, and the final test accuracy after 200 rounds for $R=2$ was $93.1\%$. In Figure~\ref{fig:fed_mnist} (c) and (d), we see that the learning curves are much smoother when all clients participate in each communication round. 

\subsubsection{CIFAR-10} The batch size was 32 and the number of communication rounds was 400. We used the CNN architecture provided in the TensorFlow tutorial \footnote{\url{https://www.tensorflow.org/tutorials/images/cnn}. 
	We used this model for simplicity, and acknowledge that it is difficult to draw comparisons to our results in Section~\ref{subsec:cifar_experiment}, where we used a ResNet20 architecture. However, we suspect that \textit{FedAvg} may perform even worse with larger models. }. Figure~\ref{fig:fed_cifar} shows the analogous curves to those in Figure~\ref{fig:fed_mnist}. 
After 200 communication rounds, the train accuracy for $R=2$ was 31.52\% in the case of 2 clients per round, and 78.37\% with full client participation. The final train accuracy after 400 rounds with full participation was 89.24\%. 
With 2 clients participating per round, the test accuracy for $R=2$ after 400 communication rounds was $47.5\%$ (compared to $67.25\%$ train accuracy), and the maximum test accuracy over all values of $R$ was $71.1\%$ (compared to $100\%$ train accuracy). Full client participation improved the speed of convergence, but did not improve the resulting accuracy: the maximum final test accuracy over all values of $R$ was $70.8\%$ in this case. 

Our results suggest that \textit{FedAvg} is highly sensitive to both the heterogeneity of the data and the amount of client participation per round, whereas our aggregation approach seems to enjoy greater robustness to heterogeneity and does not rely on the consistent availability of clients over such long time horizons.

\subsection{Robustness to Missing Classifiers}\label{supp:missing}

\begin{table*}[t]
	\centering 
	\caption{Average CIFAR-10 accuracies under classifier removals from the set cover. Standard deviation is denoted by $\sigma$.}	
	\label{table:cifar_missing}
	\begin{scriptsize}
		\begin{tabular}{ lcccc }
			\toprule 
			& & Number of removed classifiers = 1 & & \\
			\midrule 
			$R=$ & $2$ & $4$ & $6$ & $8$\\
			\bottomrule 
			\toprule  
			Train (\%) & \textbf{98.63} ($\sigma = 1.62\%$) & \textbf{94.14} ($\sigma = 3.06\%$) & \textbf{96.30} ($\sigma = 2.86\%$)  & \textbf{95.54} ($\sigma = 4.03\%$)   \\
			\midrule 
			Test (\%) & \textbf{87.51} ($\sigma = 1.48\%$) & \textbf{86.11} ($\sigma = 2.80\%$) & \textbf{88.78} ($\sigma = 2.64\%$)  & \textbf{88.28} ($\sigma = 3.80\%$)   \\
			\bottomrule 
		\end{tabular}
		\vskip 0.15in 
		\begin{tabular}{ lcccc }
			\toprule 
			& & Number of removed classifiers = 2 & & \\
			\midrule
			$R=$ & $2$ & $4$ & $6$ & $8$\\
			\bottomrule 
			\toprule  
			Train (\%) & \textbf{97.58} ($\sigma = 2.32\%$) & \textbf{86.54} ($\sigma = 5.20\%$) & \textbf{88.38} ($\sigma = 4.77\%$)  & \textbf{79.26} ($\sigma = 0.25\%$)   \\
			\midrule 
			Test (\%) & \textbf{86.62} ($\sigma = 2.11\%$) & \textbf{79.28} ($\sigma = 4.80\%$) & \textbf{81.41} ($\sigma = 4.42\%$)  & \textbf{73.73} ($\sigma = 0.83\%$)   \\
			\bottomrule 
		\end{tabular}
	\end{scriptsize}
\end{table*}

We tested the robustness of our scheme under incomplete sets of classifiers using CIFAR-10. For each value $R \in \{2,4,6,8\}$, we initially generated a set of classifiers of size $R$ using the greedy set cover algorithm described in Section \ref{sec:setcover}. These were the same classifiers used in Section \ref{subsec:cifar_experiment}. We investigated the effect on the train and test accuracies of removing \textit{(a)} a single classifier, and \textit{(b)} two classifiers from the set cover. For \textit{(a)}, we determined the resulting accuracies after removing each classifier, and then computed the average over all such removals. For \textit{(b)}, we considered all ${m^* \choose 2}$ removals of two classifiers, where $m^*$ is the size of the set cover. 

The average accuracies for each value of $R$ and their corresponding standard deviations are shown in Table \ref{table:cifar_missing}. Surprisingly, the performance under single classifier removals remained quite similar to the values in Table \ref{table:cifar}. The values for $R=2$ remained closest to those from the complete set cover, and also had the smallest variance among all values of $R$. This is likely due to the fact that smaller values of $R$ correspond to more classifiers in the set cover, each responsible for fewer classes. Hence, the removal of a single classifier when $R$ is small is likely to have a less detrimental effect on performance.

\subsection{Random Classifier Configurations}
A related question to that addressed in Section~\ref{supp:missing} is whether a random set of classifiers performs well. In practice, it may not be possible to control the distribution of data across separate entities, and randomly chosen classifiers can serve as a model for such situations. For each $R \in \{4,6,8\}$, we sampled uniformly without replacement $m$ times from the set of ${K \choose R}$ possible classifiers of size $R$, for $m \in \{m^*, (m^*+1), (m^*+2), (m^*+3)\}$, where $m^*$ is the number of classifiers specified by the greedy set cover algorithm. That is, we generated slightly more classifiers than those in the set cover. Note that a random set of classifiers, even of cardinality larger than $m^*$, may fail to comprise a complete set cover. 

Each randomly chosen classifier was trained on CIFAR-10 with the same architecture and hyperparameters as in Section \ref{subsec:cifar_experiment}. 
Table \ref{table:cifar_random} gives the resulting training and testing accuracies of our scheme (the values shown are averages over two trials). Overall, we observe that a set of exactly $m^*$ random classifiers tends to yield (except in the case of $R=8$) lower accuracies than those attained by the complete set cover. For example, with $R=4$ and $m = m^* = 9$, the random construction achieved train and test accuracies of (respectively) $91.66\%$ and $84.33\%$, whereas the complete set cover achieved respective accuracies of $99.49\%$ and $90.82\%$. However, with the addition of a few more classifiers, the accuracy can increase dramatically and can even surpass the set cover accuracy. For instance, with $R=8$ and $m = 6$, the train and test accuracies of the random construction were $99.75\%$ and $93.37\%$, respectively, in contrast to $99.16\%$ and $91.99\%$ from the full set cover. We speculate that the random construction's ability to outperform the set cover is due to the introduction of redundant connections between classes, which may constitute a form of natural error correction. In summary, these results suggest that our scheme is robust not only to uncovered connections between classes, but also to arbitrary distributions of the data across clients.

\begin{table*}[t]
	\centering 
	\caption{CIFAR-10 accuracies with random classifier configurations. The number of randomly chosen classifiers is denoted by $m$. }
	\label{table:cifar_random}
	\begin{scriptsize}
	\begin{tabular}{lll}
		
		\begin{tabular}{ lcccc }
			\toprule 
			& & $\mathbf{R = 4}$ & & \\
			\midrule  
			$m=$ & $9$ & $10$ & $11$ & $12$ \\
			\bottomrule 
			\toprule  
			Train (\%) &   91.66  & 92.15   & 92.44 &  94.28   \\
			\midrule 
			Test (\%)  &  84.33  & 84.98  & 85.47  & 87.22  \\
			\bottomrule 
		\end{tabular}
		&\hspace{0.2in}
		\begin{tabular}{ lcccc }
			\toprule 
			& & $\mathbf{R = 6}$ & & \\
			\midrule  
			$m=$ & $5$ & $6$ & $7$ & $8$ \\
			\bottomrule 
			\toprule  
			Train (\%) &  93.97   & 98.78  & 98.87 & 99.74    \\
			\midrule 
			Test (\%)  &  86.70 &  91.60 & 91.82 & 92.76  \\
			\bottomrule 
		\end{tabular}
	\end{tabular}
	\vskip 0.1in 
	\begin{tabular}{ lcccc }
		\toprule 
		& & $\mathbf{R = 8}$ & & \\
		\midrule  
		$m=$ & $3$ & $4$ & $5$ & $6$ \\
		\bottomrule 
		\toprule  
		Train (\%) & 95.71 & 99.48  & 99.75 & 99.75    \\
		\midrule 
		Test (\%)  & 88.88 & 92.60 & 93.24  & 93.37 \\
		\bottomrule 
	\end{tabular}
	\end{scriptsize}
\end{table*}

\subsection{Performance with Less Data and Less Training Time}

\begin{table}
	\centering 
	\caption{CIFAR-10 accuracies with less training data and training time per classifier.}
	\label{table:cifar_lessdata}
	\begin{scriptsize}
		\begin{tabular}{ lcccc }
			\toprule 
			$R=$ & $2$ & $4$ & $6$ & $8$\\
			\bottomrule 
			\toprule  
			Train (\%) & 71.75 & 74.55  & 78.32  & 78.23   \\
			\midrule 
			Test (\%) & 68.56  & 71.37 & 75.64 & 75.60  \\
			\bottomrule 
		\end{tabular}
	\end{scriptsize}
\end{table}

In practice, it may also be difficult to ensure that each local classifier has access to sufficiently many training samples. To study the performance of our scheme under such scenarios, we reduced the number of CIRAR-10 training images per class to 500, compared to 5,000 images per class in the entire training set. We also trained each classifier for 100 epochs, compared to 200 epochs in other experiments in this paper. For each $R \in \{2,4,6,8\}$, we trained the classifiers specified by the greedy set cover algorithm. All other details match Section \ref{subsec:cifar_experiment}. As shown in Table \ref{table:cifar_lessdata}, there is a noticeable degradation in both training and testing accuracy compared to Table \ref{table:cifar}. 
However, even with only a fraction of the original training data and significantly fewer training epochs than usual -- two relevant characteristics for real-world settings -- our aggregation approach exhibits acceptable performance.

\section{Discussion and Future Work}\label{sec:discussion}
First, we discuss potential connections of our work to \textit{federated learning} (FL) \cite{mcmahan2017communication, konevcny2016federated, konevcny2016federatedopt}. In recent years, FL has emerged as a promising decentralized learning paradigm, where a central server trains a shared machine learning model using structured updates from multiple clients. Despite the recent success of FL, the presence of \textit{statistically heterogeneous} or \textit{non-i.i.d.} data is known, both theoretically and experimentally, to be detrimental to the convergence of existing federated algorithms \cite{zhao2018federated, li2020on, mcmahan2017communication,eichner2019semi, smith2017federated, li2020federated, wang2019adaptive, sattler2019robust}. In practice, decentralized data located at different edge devices exhibit very different statistical characteristics, and the sources of such heterogeneity are often quite natural. For instance, users living in different geographical regions are likely to have different types of photos on their mobile phones or language patterns manifest in text messages.

The approach we propose in this paper can potentially mitigate the effects of statistical heterogeneity in the emerging sub-area known as \textit{cross-silo} federated learning \cite{kairouz2019advances}. Naturally, different organizations or silos, such as hospitals, have access to heterogeneous types of data and may seek to form an accurate global prediction model by fusing their local models. Moreover -- consistent with the mathematical model proposed in our work -- it is natural to expect that each organization has access to sufficient training data to learn an accurate local model. Treating the local classifiers as black boxes also allows the silos to use different learning algorithms and architectures (e.g., decision trees, support-vector machines, or neural networks). This paper can be viewed as a contribution to the rigorous study of this growing field, and is similar in spirit to previously proposed solutions to the FL heterogeneity problem which suggest to maintain separate, personalized models for each client or each cluster of similar clients \cite{smith2017federated, fallah2020personalized, mansour2020approaches, eichner2019semi}. 

Future work includes generalizing our results to other probabilistic  models, as well as to the multi-label classification setting in which each sample may belong to more than one category. In this case, we suspect that the corresponding algorithmic approach will be to solve a set cover problem on a hypergraph, rather than on a standard graph as studied in the multiclass setting. We also seek to identify order-optimal deterministic constructions for both the perfect accuracy and statistical settings. 

 Other future directions include using tools from coding theory to increase the robustness of the classification schemes to poorly trained or byzantine classifiers, and performing experiments on large-scale datasets such as ImageNet.  Finally, if the learning algorithms used to produce the local classifiers are known in advance, how might we leverage this information to design better aggregation schemes?

\section{Related Work}\label{sec:related}
\subsection{Crowdsourcing}
The problem of inferring the true underlying class of a sample from noisy observations is a longstanding one \cite{dawid1979maximum}, and has more recently re-emerged as an active area of research in the context of crowdsourcing. Many approaches (e.g., \cite{dawid1979maximum, jin2003learning, raykar2010learning, sheng2008get})  are based on the iterative expectation-maximization (EM) algorithm \cite{dempster1977maximum}. However, these approaches are based on heuristics which have been primarily evaluated through numerical experiments. 

A recent line of work \cite{karger2014budget, NIPS2011_c667d53a, karger2013efficient, ghosh2011moderates, vempaty2014reliable} takes a more rigorous approach to the design of crowdsourcing systems. With the exception of \cite{karger2013efficient}, these works focus on binary classification tasks, rather than the multiclass setting we consider. Our work is still distinct from \cite{karger2013efficient} in a couple of key ways. First, \cite{karger2013efficient} adopts a probabilistic model in which the confusion matrices of the workers are assumed to be drawn from a common distribution. In our setting, each worker has its own distribution which depends on its domain expertise. We also restrict the support of each worker's labels to be the set of classes for which it is an expert. Second, a large focus of our work is an adversarial setting rather than a probabilistic one. It is interesting to note, however, that the upper bound of $O((K/q)\log(K/\epsilon))$ from \cite{karger2013efficient} on the number of workers per task their scheme needs to accurately label a sample -- where $q$ is a ``crowd-quality'' parameter defined in their probabilistic model -- resembles the scaling we obtained in our statistical setting.

\subsection{Dataset Construction and Self-Training} 
In contrast to the crowdsourcing literature, our work also explores potential applications to automated data labeling. We have demonstrated how one can use a set of local classifiers to generate labels for an even larger scale multiclass dataset. As discussed in the previous section, this may be of interest in the nascent field of cross-silo federated learning \cite{kairouz2019advances}. 

This idea of automating the data labeling process has been explored in prior works, such as in the dataset construction \cite{collins2008towards} and self-training literatures \cite{Xie_2020_CVPR, scudder1965probability, yarowsky1995unsupervised, riloff2003learning, yalniz2019billion}, but to the best of our knowledge they all employ centralized classifiers, e.g., a single neural network trained on a global dataset, to generate the labels. Our setting is more distributed: we consider the problem of aggregating multiple classifiers trained on distinct subsets of the global dataset. 

\subsection{Ensemble Methods for Multiclass Classification}
The decomposition of multiclass classification problems into binary classification tasks has been studied before, most notably through the ensemble methods of \textit{one-vs.-one} (also known as \textit{all-pairs} or \textit{pairwise coupling}) \cite{Friedman1996AnotherAT, hastie1998classification}, \textit{one-vs.-all} \cite{bishop2006pattern, rifkin2004defense}, and \textit{error-correcting output codes} (ECOCs) \cite{dietterich1994solving}. A more general framework which encapsulates multiclass to binary reductions using margin-based classifiers was proposed in \cite{allwein2000reducing}. One key drawback of one-vs.-all and ECOCs is that they typically require each binary classifier to have access to the entire training set. Importantly, this paper outlines a more general approach to constructing classifiers from not only binary classifiers, but classifiers of arbitrary size which can be trained from only a partial view of the data. Our approach can be viewed as a type of meta or hierarchical ensemble method that combines a diverse set of classifiers which may themselves consist of smaller ensembles. Recall that when $R=2$ (the smaller classifiers are binary), our approach reduces to the one-vs.-one decomposition method.

\section*{Acknowledgement}
This work was supported in part by the National Science Foundation under Grant IIS-1838179, Grant ECCS-2037304, and Grant NeTS-1817205; Facebook Research; Adobe Research; Stanford SystemX Alliance; a Google Faculty Research Award; and a Cisco Systems Stanford Graduate Fellowship.

The authors thank the JSAIT reviewers for their invaluable feedback, particularly for their suggestion to connect this work to the dataset construction and crowdsourcing literatures. S.A. would also like to thank Wei-Ning Chen, Kfir Dolev, and Huseyin A. Inan for helpful discussions during the research process. 

\bibliography{references}
\bibliographystyle{IEEEbib}

\newpage 
\appendix

\subsection{Proof of Theorem \ref{thm:perfect_UB}}
For the first result, we employ the probabilistic method. Consider a random binary matrix $A \in \{0,1\}^{m \times K}$ with each row chosen independently and uniformly at random from the space of possible weight-$R$ binary vectors of length $K$. We will choose $m$ such that there is a strictly positive probability that $A$ is a fully distinguishing matrix, thereby proving the existence of such a matrix. Let \[P_e = \PP(A \text{ is not fully distinguishing})\] 
and let $E_j$ be the event that two columns of $A$ are different or both equal to $0$ in the $j^\text{th}$ place. 
By the union bound, $P_e$ can be upper bounded by a sum over all ${K \choose 2}$ pairs of columns of $A$ as 
\begin{align*}
P_e &\leq \Sum{i=1}{{K \choose 2}} \PP(\text{the } i^\text{th} \text{ pair of columns of } A \text{ shares no 1 }  \\[-10pt]
	&\hspace{0.5in} \text{ in the same place}) \\
&= {K \choose 2} \Prod{j=1}{m} \PP(E_j) \\
&= {K \choose 2} \PP(E_1)^m \\
&= {K \choose 2} \Big(1-\PP(E_1^c))^m
\end{align*}
where $E_1^c$ is the event that two columns of $A$ are both equal to $1$ in the $1^\text{st}$ place. 
Upon fixing both columns of $A$ to be equal to 1 in the $1^\text{st}$ place, the number of ways to assign the remaining $(R-2)$ ones to the remaining $(K-2)$ spots in the same row is ${K-2 \choose R-2}$. Therefore, 
\begin{align*}
P_e &\leq {K \choose 2} \Bigg(1-\frac{{K-2 \choose R-2}}{{K \choose R}}\Bigg)^m \\
&= \frac{K(K-1)}{2}\cdot \Big(1-\frac{R(R-1)}{K(K-1)}\Big)^m \\
&\leq \frac{K(K-1)}{2} \cdot \exp\Bigg(-m\cdot \frac{R(R-1)}{K(K-1)}\Bigg)
\end{align*}
where the second inequality follows from the fact that $1-x \leq e^{-x}$ for all $x \in \RR$. To ensure that $P_e < 1$, it suffices to choose \[m = \left\lceil\frac{K(K-1)}{R(R-1)}\cdot \log\Bigg(\frac{K(K-1)}{2}\Bigg) + 1\right\rceil.\]

For the second part of the theorem, we simply note that $A$ is fully distinguishing with probability at least $1-\delta$ if $P_e \leq \delta$, from which it follows that 
\[m = \left\lceil \frac{K(K-1)}{R(R-1)} \cdot \log\Bigg(\frac{K(K-1)}{2\delta}\Bigg) \right\rceil\]
is sufficient. 

$\hfill\qed$

\subsection{Proof of Theorem \ref{thm:stat_LB}}
We assume without loss of generality that all classifiers are distinct, as the presence of duplicate classifiers does not reduce the probability of error any more than distinct classifiers do.

\subsubsection{Proof of Lemma \ref{lemma:entropy}}
We expand the conditional entropy of $Y$ given $Z$ as 
\begin{align*}
H(Y\,|\,Z) &= \Sum{k \in [K]}{} \pi(k)H(Y\,|\,Z=k) \\
&= \frac{1}{K}\Sum{k \in [K]}{} H(Y_1,\ldots, Y_m\,|\,Z=k) \\
&= \frac{1}{K}\Sum{k \in [K]}{} \Sum{i \in [m]}{} H(Y_i \,|\, Z=k)
\end{align*}
where the last equality follows from the conditional independence of the classifier predictions given $Z$. Next, note that for each $k \in [K]$ and $i \in [m]$ we have
\[
H(Y_i\,|\, Z=k) =
\begin{cases}
\log R, & \text{ if } k \not\in \calY_i \\
0, & \text{ if } k \in \calY_i 
\end{cases}
\]
as $Y_i$ is deterministic if $k \in \calY_i$, and otherwise equals a class chosen uniformly at random from the $R$ classes in $\calY_i$. Let $N = \Big|\{(k,i) \in [K] \times [m] \, : \, k \not\in \calY_i\}\Big|$, and note that $N = m(K-R)$ since for each $i \in [m]$ there are precisely $K-R$ classes that are not included in $\calY_i$. Continuing from before, we have 
\begin{align*}
H(Y\,|\,Z) &= \frac{1}{K}\Sum{\substack{k\in[K],\, i \in [m] \\ k \not\in \calY_i}}{}\log R \\
&= \frac{1}{K}\cdot N \cdot \log R \\
&= m\cdot \frac{(K-R)}{K}\cdot \log R. 
\end{align*}
$\hfill\qed$

\subsubsection{Main Proof}
If $\calY$ denotes the set of possible values of $Y$, then the entropy of $Y$ can be bounded as \[H(Y) \leq \log |\calY| \leq \log (R^m) = m\log R.\] From Lemma \ref{lemma:entropy}, $H(Y\,|\,Z) = m\cdot \frac{(K-R)}{K}\cdot \log R$. Thus, since the mutual information between $Y$ and $Z$ is given by $I(Y;Z) = H(Y) - H(Y\,|\,Z)$, 
\begin{align}
I(Y;Z) &\leq m \log R - m\cdot \frac{(K-R)}{K}\cdot \log R \nonumber \\
&= m\cdot\frac{R}{K}\log R.  \label{eqn:thm4_1}
\end{align}
Observe that $Z \to Y \to g(Y)$ forms a Markov chain. By the data processing inequality, 
\begin{equation}\label{eqn:thm4_2} 
I(Y;Z) \geq I(Z;g(Y)) = H(Z) - H(Z\,|\,g(Y)).
\end{equation}
Since $Z$ is uniformly distributed over the $K$ classes by assumption, we have $H(Z) = \log K$. If $H(P_e)$ denotes the binary entropy corresponding to $P_e = \PP(g(Y) \neq Z)$, then $H(P_e) \leq \log(2)$. By Fano's inequality, combined with the assumption that $P_e \leq \epsilon$, we therefore have
\begin{align}
H(Z\,|\,g(Y)) &\leq H(P_e) + P_e\log(K-1) \nonumber \\
&\leq \log(2) + \epsilon \log K. \label{eqn:thm4_3}
\end{align}
Combining (\ref{eqn:thm4_1}), (\ref{eqn:thm4_2}), and (\ref{eqn:thm4_3}) yields 
\begin{align*}
m\cdot\frac{R}{K}\log R &\geq H(Z)-H(Z\,|\,g(Y)) \\
&\geq \log K - (\log(2)+\epsilon \log K) \\
&= (1-\epsilon)\log K - \log(2)
\end{align*}
and dividing both sides of the inequality by $\frac{R}{K}\log R$ and taking the ceiling gives the final result. 
$\hfill\qed$

\subsection{Proof of Theorem \ref{thm:stat_UB}}
The probability of error can first be bounded as 
\begin{align}
P_e &= \frac{1}{K}\Sum{k=1}{K}\PP(g(Y) \neq k \, | \, Z=k) \nonumber \\
&= \PP(g(Y) \neq 1 \, | \, Z = 1) \label{eqn:sym} 
 \\ 
&\leq \PP\Big(\exists j \in \{2,\ldots,K\} \, : \, \calL(Y;j) \geq \calL(Y;1) \, | \, Z=1\Big) \nonumber \\
&\leq \Sum{j=2}{K} \PP\Big(\calL(Y;j) \geq \calL(Y;1)\, | \, Z=1\Big) \label{eqn:union} 
\\
&\leq K\cdot \PP\Big(\calL(Y;2) \geq \calL(Y;1)\, | \, Z=1\Big).\label{eqn:ml}
\end{align}
where (\ref{eqn:sym}) follows from the symmetry in the random classifier construction, and (\ref{eqn:union}) uses the union bound. In (\ref{eqn:ml}), we again use the symmetry in the construction, and $\PP$ represents the randomness in both the construction and in $Y$. 

Let $Y'$ be a random output drawn conditionally on $Z=1$. Consider the event $E: \calL(Y';2) \geq \calL(Y';1)$, and further define the events $B_i: 1\not\in \calY_i, 2 \in \calY_i$ and $C_i: Y'_i = 2$ for each $i \in [m]$. We claim that for any $i \in [m]$, \[B_i \cap C_i^c \Rightarrow E^c,\] which can be verified through the following argument. Fix $i \in [m]$, and note that the event $B_i \cap C_i^c$ means that both $1\not\in \calY_i, 2 \in \calY_i$ and $Y_i' \neq 2$. Under our classifier model, we have $\PP(Y_i = 2\,|\, Z=2) = 1$, or equivalently $\PP(Y_i \neq 2 \,|\, Z=2) = 0$. It follows that $\calL(Y';2) = 0$, whereas $\calL(Y';1) > 0$ since $Y'$ was drawn from $\calS_1$ (the output set of class 1), thus proving the claim.

As a consequence of the above claim, we have that \[\bigcup_{i=1}^m (B_i \cap C_i^c) \subseteq E^c\] and hence, by standard set-theoretic arguments, 
\begin{align*}
E &\subseteq \Bigg(\bigcup_{i=1}^m (B_i \cap C_i^c)\Bigg)^c 
= \bigcap_{i=1}^m (B_i^c \cup C_i) 
= \bigcap_{i=1}^m \Big(B_i^c \cup (C_i \cap B_i)\Big).
\end{align*}
It follows that 
\begin{align*}
P_e &\leq K\cdot \PP(E) \\
&\leq K\cdot \PP\Big(\bigcap_{i=1}^m \Big(B_i^c \cup (C_i \cap B_i)\Big)\Big) \\
&= K\cdot \prod_{i=1}^m\PP(B_i^c \cup (C_i \cap B_i)) \\
&= K\cdot \PP(B_1^c \cup (C_1 \cap B_1))^m  \\
&= K \cdot \Big(\PP(B_1^c) + \PP(C_1\,|\,B_1)\PP(B_1)\Big)^m \\
&= K\cdot \Big(1-\PP(B_1) + \PP(C_1\,|\,B_1)\PP(B_1)\Big)^m.
\end{align*}

It now remains to compute $\PP(B_1)$ and $\PP(C_1\,|\,B_1)$. First, note that when conditioned on $B_1 : 1\not\in \calY_1, 2 \in \calY_1$, the probability of $C_1: Y_1' = 2$ is exactly $1/R$ (recall that $Y'$ is a random output conditioned on $Z=1$). Hence, 
\[\PP(C_1\,|\,B_1) = \frac{1}{R}.\]
Second, under the random construction in which classifiers are selected independently and uniformly at random from the set of all size-$R$ classifiers, the probability of $B_1$ is proportional to the number of ways to choose the remaining $(R-1)$ classes from the remaining $(K-2)$ total classes (since we are constraining the classifier to contain class 2 but not class 1). That is, 
\[\PP(B_1) = \frac{{K-2 \choose R-1}}{{K \choose R}} = \frac{R(K-R)}{K(K-1)}.\]
Continuing the previous bound, we now have 
\begin{align*}
P_e &\leq K\cdot \Big(1-\frac{R(K-R)}{K(K-1)} + \frac{K-R}{K(K-1)}\Big)^m \\
&= K\cdot \Big(1-\frac{(K-R)(R-1)}{K(K-1)}\Big)^m \\
&\leq K\cdot \exp\Big(-m\cdot\frac{(K-R)(R-1)}{K(K-1)}\Big)
\end{align*}
where the final inequality uses the fact that $1-x \leq e^{-x}$. 
To ensure that $P_e \leq \epsilon$, we need
\[m \geq \frac{K(K-1)}{(K-R)(R-1)}\cdot \log\Big(\frac{K}{\epsilon}\Big)\]
and thus it suffices to have 
\[m = \left\lceil \frac{K(K-1)}{(K-R)(R-1)}\cdot \log\Big(\frac{K}{\epsilon}\Big) \right\rceil. \]
$\hfill\qed$

\end{document}